\documentclass[11pt]{article}

\usepackage{parskip}
\usepackage[margin=1in]{geometry}
\usepackage[english]{babel}
\usepackage{microtype}
\usepackage{amsmath,amssymb,amsthm}
\usepackage{bm}
\usepackage{hyperref}
\usepackage{mathrsfs}
\usepackage[utf8]{inputenc} 
\usepackage[T1]{fontenc}    
\usepackage{hyperref}       
\usepackage{url}            % simple URL typesetting
\usepackage{booktabs}       % professional-quality tables
\usepackage{amsfonts}       % blackboard math symbols
\usepackage{nicefrac}      
\usepackage{microtype}      % microtypography
\usepackage{xcolor}         
\usepackage{booktabs}
\usepackage{multirow}
\usepackage{microtype}
\usepackage{authblk}
\usepackage{tikz}
\usetikzlibrary{positioning, arrows.meta, fit, backgrounds, calc, shapes.geometric}
\usepackage{pdfpages}

\usepackage{graphicx}
\usepackage{subcaption}
\usepackage{enumitem}
\usepackage{amsmath,amssymb,amsthm}
\usepackage{bm}

\usepackage{xcolor}
\definecolor{codebg}{rgb}{0.96,0.96,0.96}
\usepackage{algorithm}
\usepackage[noend]{algpseudocode}

\newtheorem{lemma}{Lemma}
\newtheorem{proposition}{Proposition}
\newtheorem{corollary}{Corollary}
\newtheorem{remark}{Remark}
\newtheorem{example}{Example}
\usepackage{natbib}

\title{Soft Specialists: $\alpha$-Rényi Ensembles for Uncertainty-Aware LLM Post-Training}

\author[1]{Paula Cordero-Encinar}
\author[1]{Georgy Tyukin}
\author[1,2]{Andrew B. Duncan}

\affil[1]{Department of Mathematics, Imperial College London}
\affil[2]{Bessemer AI}

\begin{document}

\maketitle

\begin{abstract}
 Existing training approaches for large language models learn a single set of parameters, based on large volumes of data, which is typically heterogeneous, conflicting and often outright contradictory.   As a result, the model is forced to compress conflicting goals, and inherent uncertainties into a single, averaged pattern of behaviour.   We propose an $\alpha$-R\'{e}nyi variational framework for learning distributions over post-training parameters, offering an uncertainty-aware alternative to  deep ensemble approaches. The resulting variational objective interpolates between classical variational Bayes and predictively oriented posterior learning,  balancing between globally plausible individual models against systems of complementary specialists. We identify local stability criteria,   demonstrating how model misspecification can make non-degenerate posterior spread locally favourable,  manifesting contradictory or conflicting data as epistemic uncertainty.  We apply our framework to LLM post-training,  learning an ensemble of LoRA adapters attached to a shared, frozen base model, providing a scalable training procedure for both supervised fine-tuning and preference optimisation. Our approach enables training examples to be softly routed across ensemble members, promoting model specialisation and providing actionable uncertainty estimates across different tasks.
\end{abstract}

\section{Introduction}
\label{sec:intro}

Large language models are typically adapted to downstream tasks, safety requirements and user preferences through post-training techniques such as Supervised Fine-Tuning (SFT) \cite{qi2024fine}, Direct Preference Optimisation (DPO), \cite{rafailov2023direct} or Reinforcement Learning from Human Feedback (RLHF) \cite{bai2022training, dai2024safe, ouyang2022training}.  In all these approaches, the output is a single adapted parameter vector, which must compress any uncertainty, ambiguity as well as and potentially conflicting preferences.   This forces the learned parameter vector to absorb competing pressures that may not admit a satisfactory joint resolution.

This issue has implications for alignment and high-stakes deployment. Ideally, a single post-trained model is expected to preserve useful base capabilities, remaining helpful under ambiguous requests while avoiding harmful behaviour \cite{anwar2024foundational, ganguli2022red}. However, these goals need not be simultaneously well represented by a single set of parameters: a model pushed toward safer operation may over-refuse benign requests or degrade capability, while a model tuned to remain helpful may become brittle under ambiguous or malicious prompts \cite{rottger2023xstest, cui2025or}. In essence, a single adapter post-training pipeline has no explicit mechanism for representing unresolved uncertainty.

Bayesian methods offer one potential approach through their ability to quantify epistemic uncertainty.  Bayesian neural networks, variational inference, Laplace approximations, dropout-based approximations, and sampling-based methods have all been used to represent epistemic uncertainty in deep learning \citep{arbel2026primer,kingma2013auto,blundell2015weight,kingma2015variational,mackay1992practical,yang2024bayesian,neal2012bayesian,chen2014stochastic,gal2016dropout}.  Bayesian methods have been deployed in the context of Language model safety, \cite{hu2026adaptive}.     However, the posterior distribution is intended to represent epistemic uncertainty over parameter values that are {individually} plausible explanations of the entirety of the observed data under the assumed model and prior.  A ramification of this is that even mild forms of model misspecification can cause Bayesian posterior predictive distributions in performing arbitrarily badly \citep{grunwald2017inconsistency}.

Deep ensembles \citep{fort2019deep, lakshminarayanan2017simple} offer a practical second approach.  
By combining  independently trained models at inference time, they often yield useful heuristic uncertainty,  particularly when combined with post-hoc calibration strategies \cite{angelopoulos2025learn,rivera2024conformal}. As their diversity is usually induced through random initialisation, data order and optimisation noise, ensemble members are not coordinated as samples of a posterior distribution.   
However, introducing appropriate coordination during training allows these ensembles to effectively approximate a Bayesian posterior \cite{d2021repulsive, wild2023rigorous}.

A third approach arises from generalised Bayesian methods \cite{grunwald2007suboptimal, zellner1988optimal, aitchison1975goodness, bissiri2016general,Knoblauch2022RoT}, designed to better handle misspecification.   Methods such as Gibbs posteriors \citep{jiang2008gibbs, martin2022direct}, tempered or fractional posteriors \citep{bhattacharya2019bayesian,wenzel2020good}, and safe-Bayesian approaches \citep{grunwald2012safe}   adapt posterior concentration to mitigate brittleness arising from the mismatch between the (unseen) data-generating distribution and the parametrised model family. These methods are highly relevant to  training large-scale deep learning models, where the distribution is rarely clean or  homogeneous.   Nevertheless, they retain the same basic structure where posterior mass is assigned according to the averaged loss of individual parameter choices over the full dataset. 

Recent work on predictively oriented posteriors (PrO) \citep{McLatchie2025PrO} builds upon the generalised Bayesian approach, by evaluating a distribution \(Q\) through the predictive quality of the mixture it induces.
This is effective under heterogeneity or misspecification: no single parameter may adequately explain the data, but a mixture of complementary predictors might. However, the predictively oriented objective lies at the opposite end of the spectrum from generalised Bayesian inference. In practice, one may want both: individually meaningful and regularised parameter settings, but also enough predictive cooperation to represent conflicting or heterogeneous data. {This motivates the need for a principled approach to interpolating between classical and prediction oriented posteriors}.

In this paper, we introduce an \(\alpha\)-R\'enyi variational framework for learning distributions over post-training parameters. For a distribution \(Q\) over parameters and a labelled example \((x,y)\), we define the per-example loss
\begin{equation*}
   \ell_\alpha(Q;x,y)
=
-\frac{1}{\alpha}
\log
\int_\Theta p_\theta(y\mid x)^\alpha\,Q(d\theta),
\qquad
\alpha\in(0,1], 
\end{equation*}
where $p_{\theta}(y \, | \,x)$ is the conditional likelihood assigned by the model with parameters \(\theta\).  As \(\alpha\to 0\), this recovers the classical variational Bayes data-fit term $-\mathbb E_{\theta\sim Q}\big[\log p_\theta(y\mid x)\big]$, while at \(\alpha=1\) it becomes the negative log-likelihood of the predictive mixture.
The parameter \(\alpha\)  directly influences the geometry of the variational objective: small \(\alpha\) favours globally plausible individual models, while larger \(\alpha\) increasingly rewards complementary specialists whose utility emerges through aggregation.

Our main application is distributional LLM post-training: instead of compressing heterogeneous or conflicting supervision into a single adapter, we learn a structured ensemble of LoRA adapters \cite{hu2022lora} whose diversity represents unresolved epistemic uncertainty.  Our parameter space $\Theta$ characterises low rank perturbations of a common  frozen base model.  The resulting  objective induces soft responsibilities across particles where each training example is routed towards those adapters that currently explain it well. In this way, specialisation emerges naturally from the variational objective itself.

\begin{figure}[t]
    \centering
    \includegraphics[width=0.8\linewidth]{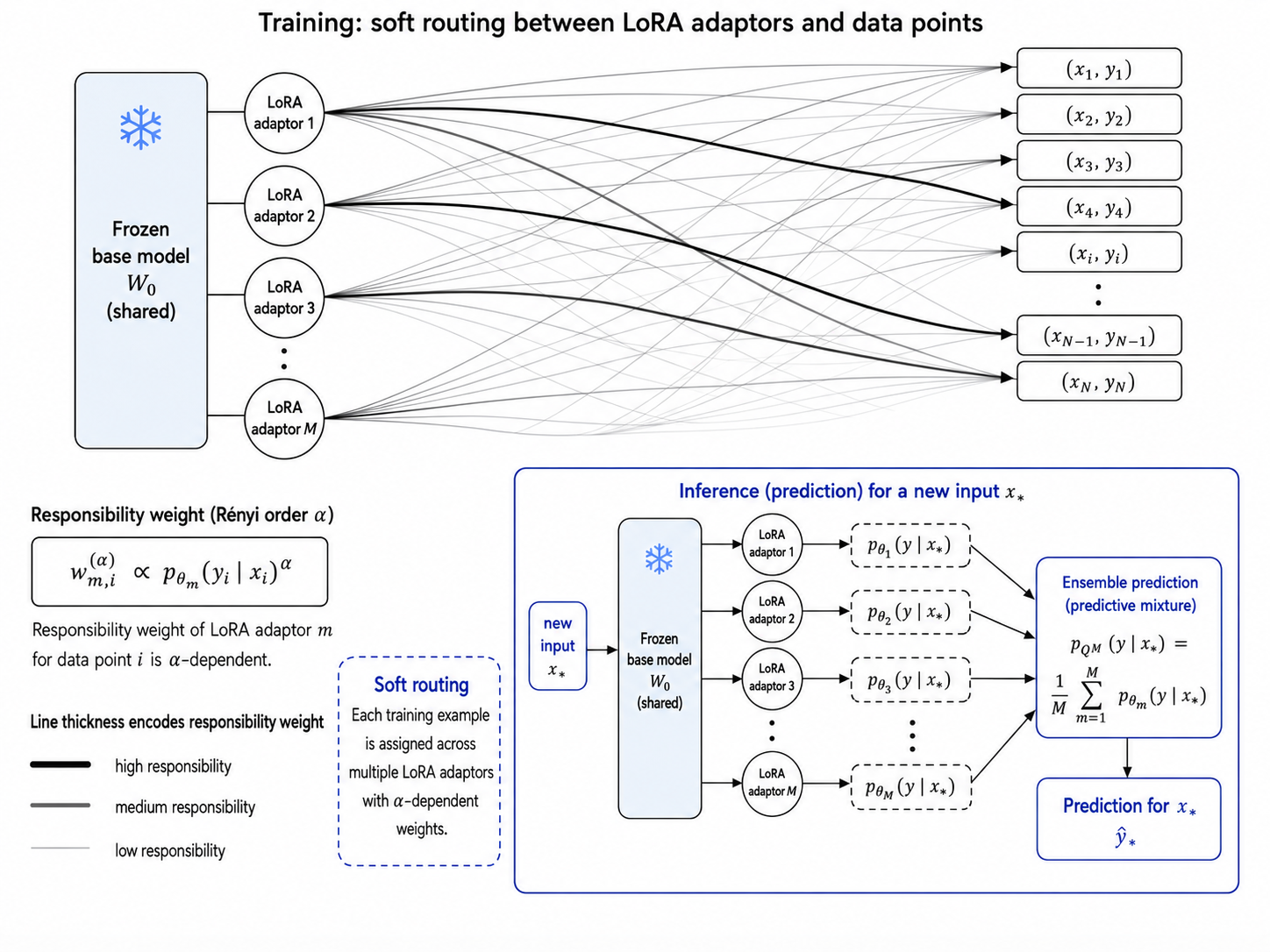}
    \caption{$\alpha$-R\'enyi flow training for LLM ensembles.  A single frozen base model $W_0$ is shared across $M$ trainable LoRA particles.  For each minibatch example, the particles produce sequence log-likelihoods $s_{i,b}$ which are coupled through the objective.  The resulting responsibilities $w_{i,b}^{(\alpha)}$ softly route examples towards particles that explain them well, inducing specialisation for $\alpha > 0$.  The final predictor is the induced mixture $p_{Q^M}$.} 
    \label{fig:lora_alpha_renyi_ensemble}
\end{figure}

Our contributions are:
\begin{enumerate}[leftmargin=1.4em]
    \item We introduce an $\alpha$-indexed variational objective over distributions of post-training parameters, interpolating between classical variational Bayes and predictively oriented posterior learning.
    \item We analyse the local stability of Dirac posteriors and show how misspecification can make non-degenerate posterior spread favourable. This behaviour is governed by an information-geometric  matrix that underpins the variational objective.
    \item We derive finite-particle training objectives for supervised fine-tuning and preference optimisation, yielding soft responsibility-based routing across LoRA adapters in LLMs, see Figure \ref{fig:lora_alpha_renyi_ensemble}.
    \item We demonstrate how the resulting ensemble can represent localised epistemic uncertainty under misspecification and contamination.
\end{enumerate}
We provide a more in-depth discussion of related work in App.~\ref{app:related_work}.

\section{Setup}
\label{sec:framework}

We begin by recalling the variational formulations underlying classical Bayesian inference and predictively oriented posteriors, and then introduce the $\alpha$-indexed family of objectives studied in this paper. Throughout, our goal is to make explicit the distinction between posteriors that are evaluated through the individual quality of their constituent parameter settings, and posteriors that are evaluated through the predictive quality of the mixture they induce.

\subsection{Interpolating between classical and predictively oriented posteriors}

Let $\Theta \subseteq \mathbb{R}^d$ denote a parameter space, and let $\{p_\theta(y \mid x) : \theta \in \Theta\}$ be a parametric family of conditional predictive models. Consider a dataset $\mathcal{D} = \{(x_n,y_n)\}_{n=1}^N$,
of independent samples from an unknown data-generating distribution $\nu$ on pairs $(x,y)$, and let $\pi_0,\, Q \in \mathcal{P}(\Theta)$ denote the reference prior and candidate posterior distributions, respectively.

The key object of interest throughout is the predictive distribution induced by $Q$, defined as the mixture predictor 
\begin{equation}
p_Q(y \mid x)
:=
\int_\Theta p_\theta(y \mid x)\,Q(d\theta).
\label{eq:mixture_predictive}
\end{equation}
When \(Q\) is the Bayesian posterior \(\pi(\cdot\mid\mathcal D)\), the distribution \(p_Q\) becomes the usual Bayesian posterior predictive, thus recovering Bayesian model averaging \cite{masegosa2020learning}.  

At the population level, the predictive target is the mixture cross-entropy, that is, $$Q^* \in \arg\min_{Q}\mathbb{E}_{(x,y)\sim \nu}[-\log \mathbb{E}_{\theta \sim Q}[p_\theta(y \mid x)]],$$ based on observations in $\mathcal{D}$, that generalise well to unseen data.
As defined, there is freedom in how we interpret $Q$, e.g. as a posterior distribution over {individually capable} predictors, or as a predictive mixture, whose aggregate yields a capable predictor. Generalised Bayesian  and predictively oriented posteriors emphasise these two viewpoints respectively.

\paragraph{Bayesian posteriors}

In the standard Bayesian formulation, the posterior over parameters is given formally by
$
\pi(d\theta \mid \mathcal{D})
\propto
p_\theta(\mathcal{D})\,\pi_0(d\theta)$, where
$p_\theta(\mathcal{D})=\prod_{n=1}^N p_\theta(y_n \mid x_n)
$.
When the exact posterior is intractable, one typically introduces a variational family and chooses $Q$ to minimise the Kullback-Leibler divergence to the true posterior. Equivalently, one minimises the free energy functional
\begin{equation}
\mathcal{F}_{\mathrm{B}}(Q)
=
-\sum_{n=1}^N \mathbb{E}_{\theta \sim Q}\big[\log p_\theta(y_n \mid x_n)\big]
+
\mbox{KL}(Q \,\|\, \pi_0),
\label{eq:vb_objective}
\end{equation}
over  $Q$.
Up to scaling conventions, this is the usual evidence lower bound (ELBO) objective.   The data-fit term in \eqref{eq:vb_objective} is the posterior expectation of the negative log-likelihood, $\sum_{n=1}^N \mathbb{E}_Q[-\log p_\theta(y_n \mid x_n)]$. 
More generally, a broad class of generalised Bayesian or Gibbs posterior methods replaces the log-likelihood by an arbitrary loss function $\ell(\theta; x,y)$, yielding objectives of the form
\begin{equation}
\mathcal{F}_{\mathrm{GB}}(Q)
=
\sum_{n=1}^N \mathbb{E}_{\theta \sim Q}\big[\ell(\theta; x_n,y_n)\big]
+
\lambda\,\mbox{KL}(Q \,\|\, \pi_0),
\label{eq:gb_objective}
\end{equation}
for some learning-rate or regularisation parameter $\lambda > 0$. This formulation encompasses tempered, fractional, and other generalised Bayesian procedures.  For both the classical and generalised Bayes objectives  \eqref{eq:vb_objective} and \eqref{eq:gb_objective}, the associated data-fit terms are linear in $Q$.  Minimising this term results in a distribution $Q$ which concentrates on $\theta$ values which minimise $\theta \rightarrow \sum_{i=1}^N l(\theta ; x_n, y_n)$ which measures a parameters individual predictive capability over the entire dataset.   As a result, these variational objectives score each parameter by its overall performance over the full dataset. In the large-data or weak-regularisation regime, this favours parameter values that act as global generalists rather than specialists whose value appears only through mixture aggregation.

Note that the Bayesian data-fit term  minimises the expected log-loss of individual models, given by $\mathbb{E}_{(x,y)\sim \nu}[\mathbb{E}_{\theta \sim Q}[-\log p_\theta(y \mid x)]]$, rather than the cross-entropy of the predictive mixture. 
By Jensen's inequality, this yields an upper bound on the true predictive risk
\begin{equation}
\mathbb{E}_{(x,y)\sim\nu}\big[-\log \mathbb{E}_{\theta \sim Q}[p_\theta(y \mid x)]\big] 
\leq 
\mathbb{E}_{(x,y)\sim\nu}\big[\mathbb{E}_{\theta \sim Q}[-\log p_\theta(y \mid x)]\big].
\label{eq:jensens_gap}
\end{equation}

In the well-specified case, where \(p^\star(\cdot\mid x)=p_{\theta^\star}(\cdot\mid x)\) for \(P_X^\star\)-a.e. \(x\), these two objectives have the same predictive optimum. Under misspecification, however, the optima can differ: the expected individual log-loss selects a single generalist, whereas the mixture cross-entropy may be minimised by a non-degenerate mixture of complementary predictors, \cite[Lemma 2]{masegosa2020learning}.

\paragraph{Predictively oriented posteriors}

Predictively oriented posteriors take a different starting point. Rather than defining a posterior over parameters through the average fit of its individual members, the idea is to select $Q$ based on the cross-entropy loss induced by \eqref{eq:mixture_predictive}.  This yields the objective
\begin{equation}
\mathcal{F}_{\mathrm{PrO}}(Q)
=
-\sum_{n=1}^N \log p_Q(y_n \mid x_n)
+
\lambda\,\mbox{KL}(Q \,\|\, \pi_0),
\label{eq:pro_objective}
\end{equation}
where $p_Q(y_n \mid x_n) = \int_\Theta p_\theta(y_n \mid x_n)\,Q(d\theta)$.
Unlike the Bayesian objectives \eqref{eq:vb_objective} and \eqref{eq:gb_objective}, the data-fit term in \eqref{eq:pro_objective} is no longer linear in $Q$, and in particular, a minimiser $Q$ will concentrate onto parameters based on which may be weak across the full dataset, but jointly capable as a mixture, thus promoting specialisation.  Similar ideas have been previously studied, e.g. \citep{lai2024predictive, lacasse2006pac, masegosa2020learning}. 

This distinction is especially important under data heterogeneity or model misspecification. When no single parameter value offers a satisfactory global explanation of the data, a mixture of complementary parameter settings can nevertheless yield a significantly stronger predictive model. Predictively oriented posteriors are designed to capture this phenomenon.

\paragraph{Defining $\alpha$-R\'enyi posteriors}

Classical variational Bayes and predictively oriented posteriors should therefore be understood as emphasising two different inferential viewpoints.  Our goal is to interpolate between these two endpoints in a principled manner.
To this end, for each $\alpha \in (0,1]$ we define the per-example loss
\begin{equation}
\ell_\alpha(Q; x,y)
:=
-\frac{1}{\alpha}
\log
\int_\Theta p_\theta(y \mid x)^\alpha\,Q(d\theta)\,.
\label{eq:alpha_pointwise}
\end{equation}
The corresponding variational objective is
\begin{equation}
\mathcal{F}_\alpha(Q)
=
\sum_{n=1}^N \ell_\alpha(Q; x_n,y_n)
+
\lambda\,\mbox{KL}(Q \,\|\, \pi_0).
\label{eq:alpha_objective}
\end{equation}
This family interpolates continuously between the classical and predictively oriented objectives. Indeed, for fixed $(x,y)$ and sufficiently integrable likelihoods, $\lim_{\alpha \to 0} \ell_\alpha(Q; x,y)
=
-\mathbb{E}_{\theta \sim Q}[\log p_\theta(y \mid x)]$,
whereas at $\alpha=1$, we have that $\ell_1(Q; x,y)
=-\log p_Q(y \mid x)$.
Consequently, \(\mathcal F_\alpha\) recovers the log-loss variational Bayes/Gibbs objective as \(\alpha\to0\), and \(\mathcal F_1=\mathcal F_{\mathrm{PrO}}\). In this sense, $\alpha$ is an \emph{inferential interpolation parameter} which determines the extent to which one promotes individually plausible models versus complementary predictors whose utility emerges in combination.

Objective \eqref{eq:alpha_objective} provides a natural unified variational framework spanning classical and predictively oriented posteriors. 
It preserves the regularised variational form familiar from Bayesian and generalised Bayesian inference, while providing a principled and interpretable path between the two endpoints.

\section{The $\alpha$-R\'enyi variational framework}

\subsection{Properties of the $\alpha$-R\'enyi loss}

We first study the proposed \(\alpha\)-R\'enyi loss \(\ell_{\alpha}\) and its underlying geometry. A defining structural property of this variational form is its non-linear aggregation of predictions across the distribution $Q$.

We first formalise how the parameter \(\alpha\) induces a smooth interpolation  between classical parameter-based variational inference and predictive mixture training. The proof is provided in App. \ref{app:Proofs}.

\begin{lemma}[Limits, interpolation, and monotonicity of the \texorpdfstring{\(\alpha\)}{alpha}-loss]
\label{lem:alpha-interpolation}
Fix a data point \((x,y)\) and a distribution \(Q\in\mathcal P(\Theta)\). Assume that
\[
p_\theta(y\mid x)>0
\qquad
\text{for \(Q\)-a.e. }\theta,
\]
and that \(\log p_\theta(y\mid x)\) and \(p_\theta(y\mid x)^\alpha\) are
\(Q\)-integrable for the values of \(\alpha\) under consideration. Then the following hold.

\begin{enumerate}
    \item (\emph{Variational objective limit as \(\alpha\to0\)}) As \(\alpha\to0\),
    \[
    \ell_0(Q;x,y)
    :=
    \lim_{\alpha\to0}\ell_\alpha(Q;x,y)
    =
    -\mathbb E_{\theta\sim Q}
    \big[\log p_\theta(y\mid x)\big].
    \]

    \item (\emph{Predictive mixture at \(\alpha=1\)}) At \(\alpha=1\),
    \[
    \ell_1(Q;x,y)
    =
    -\log \mathbb E_{\theta\sim Q}
    \big[p_\theta(y\mid x)\big]
    =
    -\log p_Q(y\mid x).
    \]

    \item (\emph{Interpolation inequality}) For every \(\alpha\in(0,1]\),
    \[
    -\log \mathbb E_{\theta\sim Q}
    \big[p_\theta(y\mid x)\big]
    \;\le\;
    \ell_\alpha(Q;x,y)
    \;\le\;
    -\mathbb E_{\theta\sim Q}
    \big[\log p_\theta(y\mid x)\big].
    \]

    \item (\emph{Monotonicity in \(\alpha\)}) For fixed $Q \in \mathcal{P}(\Theta)$, $(x,y) \in \mathcal{D}$, the map
    \[
    \alpha\mapsto \ell_\alpha(Q;x,y)
    \]
    is non-increasing on \((0,\infty)\). In particular, for
    \(0<\alpha_1\le \alpha_2\le1\),
    \[
    \ell_{\alpha_1}(Q;x,y)
    \ge
    \ell_{\alpha_2}(Q;x,y).
    \]
    The inequality is strict unless \(p_\theta(y\mid x)\) is \(Q\)-a.s. constant.
\end{enumerate}
\end{lemma}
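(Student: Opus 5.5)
The plan is to write everything in terms of the random variable $Z:=\log p_\theta(y\mid x)$ with $\theta\sim Q$. Then $\ell_\alpha(Q;x,y)=-\frac1\alpha\log\mathbb E_Q[e^{\alpha Z}]=-\frac1\alpha K(\alpha)$, where $K(\alpha):=\log\mathbb E_Q[e^{\alpha Z}]$ is the cumulant generating function of $Z$. Equivalently, $\ell_\alpha=-\log\|p\|_{L^\alpha(Q)}$, where $\|p\|_{L^\alpha(Q)}=(\mathbb E_Q p^\alpha)^{1/\alpha}$ is the power mean of order $\alpha$ of $p=p_\theta(y\mid x)$. This turns all four claims into standard facts about power means and cumulant generating functions.

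Item 2 is immediate from the definition. For item 1, I would note that $K(0)=0$, so $\ell_\alpha=-(K(\alpha)-K(0))/\alpha$, and it suffices to show $K'(0^+)=\mathbb E_Q[Z]$. I would compute $\frac{d}{d\alpha}\mathbb E_Q[e^{\alpha Z}]$ at $0^+$ as the limit of $\mathbb E_Q[(e^{\alpha Z}-1)/\alpha]$. Pointwise, $(e^{\alpha Z}-1)/\alpha\to Z$, and the difference quotient is monotone in $\alpha$ by convexity of $\alpha\mapsto e^{\alpha Z}$. It therefore decreases to $Z$ as $\alpha\downarrow0$, and it is bounded above by its value at some fixed $\alpha_0$, which is integrable by the hypothesis that $p^{\alpha_0}$ is integrable. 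Monotone convergence, applied to the decreasing family whose limit $Z$ is integrable, then gives $\mathbb E_Q[(e^{\alpha Z}-1)/\alpha]\to\mathbb E_Q Z$. Since $\mathbb E_Q e^{\alpha Z}\to1$, taking the logarithm gives $K(\alpha)/\alpha\to\mathbb E_Q Z$.

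The core of the argument is item 4. I would show that $\alpha\mapsto K(\alpha)/\alpha$ is non-decreasing on $(0,\infty)$. For $0<\alpha_1<\alpha_2$, apply Jensen (equivalently Hölder) with the concave map $t\mapsto t^{\alpha_1/\alpha_2}$:
\[
\mathbb E_Q[p^{\alpha_1}]=\mathbb E_Q\bigl[(p^{\alpha_2})^{\alpha_1/\alpha_2}\bigr]\le\bigl(\mathbb E_Q[p^{\alpha_2}]\bigr)^{\alpha_1/\alpha_2}.
\]
Taking logarithms and dividing by $\alpha_1$ gives $K(\alpha_1)/\alpha_1\le K(\alpha_2)/\alpha_2$, hence $\ell_{\alpha_1}\ge\ell_{\alpha_2}$. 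The concave map is strictly concave, so equality in Jensen holds iff $p^{\alpha_2}$ is $Q$-a.s. constant, i.e. iff $p$ is $Q$-a.s. constant. This gives the strictness claim.

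Item 3 then follows from items 1, 2 and 4. The lower bound is monotonicity between $\alpha$ and $1$. The upper bound is monotonicity between $\alpha'$ and $\alpha$ combined with letting $\alpha'\to0$ via item 1. Alternatively, the upper bound can be obtained directly by Jensen for the concave $\log$, namely $\log\mathbb E_Q[p^\alpha]\ge\alpha\,\mathbb E_Q[\log p]$, which avoids any limiting argument.

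The main obstacle is the integrability bookkeeping in item 1, which must justify exchanging the limit and the expectation using only the stated integrability hypotheses. The monotone-difference-quotient argument above is the route I would try first. If the upper integrable bound at some $\alpha_0>0$ needs care, that case is still covered by the hypothesis, since $(e^{\alpha_0 Z}-1)/\alpha_0\le p^{\alpha_0}/\alpha_0$.
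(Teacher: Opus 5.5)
Your proof is correct. It takes a genuinely different route from the paper's in item 4, and it is more careful in item 1.

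\textbf{Item 4.} The paper differentiates $\varphi(\alpha)=\log\mathbb E_Q[p^\alpha]$ and shows that $\frac{d}{d\alpha}\ell_\alpha=-\alpha^{-2}\operatorname{KL}(Q_\alpha\|Q)$, where $dQ_\alpha/dQ\propto p^\alpha$ is the tilted measure. Strictness then comes from $\operatorname{KL}(Q_\alpha\|Q)>0$ whenever $p$ is not $Q$-a.s.\ constant. You instead compare two orders directly, using Jensen for the strictly concave map $t\mapsto t^{\alpha_1/\alpha_2}$; this is the classical Lyapunov/power-mean inequality. Your argument needs only $\mathbb E_Q[p^{\alpha_2}]<\infty$, requires no differentiability of $\varphi$, and gives the equality case in one line. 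The paper's route gives an exact expression for the rate of decrease. That expression uses the same tilted measures that later reappear as the responsibilities and Donsker--Varadhan optimisers.

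\textbf{Item 1.} The paper simply asserts $\varphi'(0)=\mathbb E_Q[\log p]$ ``under the stated integrability assumptions''. Your monotone difference-quotient argument actually justifies this one-sided derivative, using integrability of $\log p$ and of $p^{\alpha_0}$ for some $\alpha_0>0$. A one-sided argument is the right one here, since $\varphi$ need not be finite for $\alpha<0$.

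\textbf{Item 3.} This is the same power-mean/Jensen argument in both. Your direct Jensen bound for $\log$ is exactly the paper's left inequality $\exp\{\mathbb E_Q[\log p]\}\le(\mathbb E_Q[p^\alpha])^{1/\alpha}$.
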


\paragraph{Connection to entropic risk.}
The loss \eqref{eq:alpha_pointwise} can also be viewed as an entropic risk
functional applied to the random loss induced by sampling a parameter from
\(Q\). Writing
\[
L_\theta(x,y):=-\log p_\theta(y\mid x),
\]
we have
\[
\ell_\alpha(Q;x,y)
=
-\frac1\alpha
\log
\mathbb E_{\theta\sim Q}
\left[
\exp\{-\alpha L_\theta(x,y)\}
\right].
\]
Thus \(\ell_\alpha\) is the entropic risk of \(L_\theta(x,y)\) with negative
temperature parameter \(t=-\alpha\). In the usual risk-sensitive convention,
positive temperature emphasises high-loss outcomes, whereas negative
temperature emphasises low-loss outcomes. Our setting therefore corresponds to a risk-seeking entropic transform over parameter draws, where the loss of an example can be reduced when some members of the ensemble explain it particularly well. This is precisely the mechanism which promotes specialisation within the $\alpha$-Renyi ensemble. Note that, unlike tilted empirical risk methods, which tilt across data points for a single model, our objective does so across parameter values for each data point, thus inducing observation-specific routing within the ensemble
\citep{follmer2011entropic,pichler2020entropy,litilted}.

Lemma~\ref{lem:alpha-interpolation} shows that the family
\(\{\mathcal{F}_\alpha\}_{\alpha\in[0,1]}\) continuously connects a
parameter-posterior variational objective to a predictively oriented posterior
objective. The entropic-risk interpretation above clarifies the role of
\(\alpha\): increasing \(\alpha\) strengthens the negative-temperature tilt
towards parameter values that explain the current observation well. Following
\citep{masegosa2020learning}, we can quantify this effect through a variance
lower bound on the gap between \(\ell_0\) and \(\ell_\alpha\).

\begin{lemma}[Variance lower bound]
\label{lem:alpha_jensen_gap_variance_bound}
Let  \((x,y) \in \mathcal{D}\),  \(Q\in\mathcal P(\Theta)\), and
\(\alpha\in(0,1]\). Assume that
\[
0 < p_\theta(y\mid x) \le M_{x,y} < \infty
\qquad
\text{for \(Q\)-a.e. }\theta,
\]
where $M_{x,y}
:= \operatorname*{ess\,sup}_{\theta\sim Q}
p_\theta(y\mid x)$. Then
$$
\ell_0(Q;x,y)-\ell_\alpha(Q;x,y)
\ge
\frac{1}{2\alpha M_{x,y}^{2\alpha}}
\operatorname{Var}_{\theta\sim Q}
\!\left(
p_\theta(y\mid x)^\alpha
\right).
$$
\end{lemma}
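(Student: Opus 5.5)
Set $Z := p_\theta(y\mid x)^\alpha$ with $\theta\sim Q$, so that $0<Z\le M_{x,y}^\alpha =: m$ almost surely. The plan is to rewrite both losses as functionals of $Z$ and reduce the claim to a second-order Jensen gap for the logarithm on $(0,m]$. We have $\ell_0(Q;x,y) = -\mathbb E[\log p_\theta(y\mid x)] = -\frac1\alpha\mathbb E[\log Z]$ and $\ell_\alpha(Q;x,y) = -\frac1\alpha \log \mathbb E[Z]$. Therefore
\[
\ell_0(Q;x,y)-\ell_\alpha(Q;x,y) = \frac1\alpha\Big(\log \mathbb E[Z] - \mathbb E[\log Z]\Big),
\]
and it suffices to show $\log \mathbb E[Z]-\mathbb E[\log Z] \ge \operatorname{Var}(Z)/(2m^2)$. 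This is exactly the Masegosa-style bound applied to $Z$ in place of $p_\theta$.

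To prove this, let $\mu := \mathbb E[Z]\in(0,m]$. Expand $\log$ around $\mu$ using Taylor's theorem with Lagrange remainder: for each $z\in(0,m]$ there is $\xi$ between $z$ and $\mu$ with
\[
\log z = \log\mu + \frac{z-\mu}{\mu} - \frac{(z-\mu)^2}{2\xi^2}.
\]
Since both $z$ and $\mu$ lie in $(0,m]$, so does $\xi$, and hence $1/\xi^2\ge 1/m^2$. Taking expectations, the linear term vanishes and we obtain $\mathbb E[\log Z]\le \log\mu - \mathbb E[(Z-\mu)^2]/(2m^2)$, which is the required inequality. Dividing by $\alpha$ gives the stated bound with constant $1/(2\alpha M_{x,y}^{2\alpha})$.

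The only delicate point is integrability. $Z$ is bounded, so $\mathbb E[Z]$ and $\operatorname{Var}(Z)$ are finite. The quantity $\mathbb E[\log Z]$ may a priori equal $-\infty$ when $p_\theta$ approaches $0$; in that case $\ell_0=+\infty$ and the inequality holds trivially. Otherwise we work under the integrability assumed in Lemma~\ref{lem:alpha-interpolation}. The pointwise inequality $\log z \le \log\mu + (z-\mu)/\mu - (z-\mu)^2/(2m^2)$ holds for every $z\in(0,m]$, so integrating it is legitimate whenever the left side is integrable, or has integral $-\infty$. I expect no real obstacle beyond checking that $\xi\le m$, which holds because $\mu\le m$ follows from $Z\le m$ almost surely.
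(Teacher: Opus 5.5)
Your proof is correct and follows the paper's argument. Both reduce the gap to $\frac{1}{\alpha}\bigl(\log\mathbb{E}[U]-\mathbb{E}[\log U]\bigr)$ with $U=p_\theta(y\mid x)^\alpha\le M_{x,y}^\alpha$, and both use that $\log$ has curvature at least $1/M_{x,y}^{2\alpha}$ on $(0,M_{x,y}^\alpha]$; the only difference is that you prove the strong-concavity Jensen gap explicitly via the Lagrange remainder (and dispose of the case $\mathbb{E}[\log U]=-\infty$), whereas the paper simply invokes it.
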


Lemma \ref{lem:alpha_jensen_gap_variance_bound} shows that $\alpha$-R\'enyi loss can only substantially improve over the classical expected log-loss when different parts of the support of $Q$ can explain the observation $(x,y)$ differently. We can see this gap as  quantifying the local value of
specialisation: small when all particles behave similarly, and large when the ensemble contains complementary predictors.

\paragraph{Soft-routing responsibilities} To better understand how $\ell_{\alpha}$ induces different behaviour when $\alpha > 0$, it is convenient to rephrase the loss $\ell_{\alpha}$ in terms of  observation-specific \emph{responsibilities}.   Intuitively, these can be interpreted as governing how learning signals are distributed among different parameter settings.
For a given data point \((x,y)\) and distribution \(Q\), we define the \(\alpha\)-responsibility of a parameter \(\theta\)  as
\begin{equation}
w^{(\alpha)}(\theta; x, y, Q) :=  \frac{p_\theta(y \mid x)^\alpha}{\int_\Theta p_{\theta'}(y \mid x)^\alpha\,Q(d\theta')}.
\label{eq:continuous_responsibilities_def}
\end{equation}
 For a given observation \((x_i, y_i)\), we can write the per-example likelihood through the Donsker-Varadhan variational formula \cite{donsker1976asymptotic,zellner1988optimal},
\begin{equation*}
\ell_\alpha(Q;x_i, y_i)
=
\inf_{R_i \ll Q}
\left\{
\mathbb{E}_{R_i}[-\log p_\theta(y_i \mid x_i)]
+
\frac{1}{\alpha}\operatorname{KL}(R_i\|Q)
\right\}.
\end{equation*}
Therefore, the finite-sample objective may be written as
\begin{equation*}
\mathcal F_{\alpha}(Q)
=
\sum_{i=1}^N
\inf_{R_i \ll Q}
\left\{
\mathbb{E}_{R_i}[-\log p_\theta(y_i \mid x_i)]
+
\frac{1}{\alpha}\operatorname{KL}(R_i\|Q)
\right\}
+
\lambda\,\operatorname{KL}(Q\|\pi_0).
\label{eq:finite_sample_nested}
\end{equation*}

Writing \(r_i=dR_i/dQ\), the Donsker--Varadhan representation becomes
\[
\ell_\alpha(Q;x_i, y_i)
=
\inf_{\substack{r_i\ge 0\\ \mathbb E_Q r_i=1}}
\left\{
-\mathbb E_Q[r_i(\theta)\log p_{\theta}(y_i \mid x_i)]
+
\frac1\alpha
\mathbb E_Q[r_i(\theta)\log r_i(\theta)]
\right\},
\]
it follows that the minimiser is  $r_i^\star(\theta)
= w^{(\alpha)}(\theta; x_i, y_i, Q)$.  Thus, the responsibilities in \eqref{eq:continuous_responsibilities_def} are precisely the
Radon-Nikodym derivatives of the observation-specific tilted measures
\(R_i^\star\) with respect to the global posterior \(Q\). Evaluating the
Donsker-Varadhan representation at the optimiser gives
\begin{equation}
\ell_\alpha(Q;x_i,y_i)
=
-\mathbb E_Q\!\left[
w_i^{(\alpha)}(\theta)\log p_{\theta}(y_i\mid x_i)
\right]
+
\frac{1}{\alpha}
\mathbb E_Q\!\left[
w_i^{(\alpha)}(\theta)\log w_i^{(\alpha)}(\theta)
\right].
\label{eq:donsker_varadhan_optimiser}
\end{equation}

Eq.~\eqref{eq:donsker_varadhan_optimiser} shows that each observation
induces a trade-off between local predictive fit and deviation from \(Q\). The first term favours tilted measures \(R_i\) that place mass
on parameters assigning high likelihood to \((x_i,y_i)\). The second term,
\(\alpha^{-1}\operatorname{KL}(R_i\|Q)\), penalises moving this observation-specific
tilt away from \(Q\). Thus \(\alpha\) controls the strength of local reweighting:
as \(\alpha\to0\), the KL penalty dominates and \(R_i^\star\) remains close to
\(Q\), recovering the classical averaged-loss regime; for larger \(\alpha\),
the penalty weakens and \(R_i^\star\) can concentrate on the parts of \(Q\) that
explain the observation well. This is the variational origin of the soft-routing
responsibilities.

To shed some more light on the behaviour of the R\'enyi variational objective for $0 < \alpha < 1$,  we note that for a fixed $(x,y)$ the responsibility in \eqref{eq:continuous_responsibilities_def} defines a probability density with respect to  $Q(d\theta)$ for every $\alpha$. 
Denote $Z=\int_\Theta p_{\theta'}(y\mid x)\,Q(d\theta')$ and $w(\theta)=w^{(1)}(\theta; x, y, Q)$, we have the following auxiliary expression
$$
p_\theta(y\mid x)^\alpha = (w^{(1)}(\theta;x, y, Q) Z)^\alpha = (w(\theta) Z)^\alpha.
$$
Using this, we can write
\begin{align*}
\ell_\alpha(x,y) &= -\frac{1}{\alpha}\log\int_{\Theta} Z^\alpha w(\theta)^\alpha Q(d\theta) = -\log Z - \frac{1}{\alpha}\log\int_{\Theta} w(\theta)^\alpha Q(d\theta) \\
&= \ell_1(x,y) - \frac{1-\alpha}{\alpha} H_\alpha(w; Q),
\end{align*}
where 
$$
H_\alpha(w; Q) = \frac{1}{1-\alpha} \log \int_{\Theta} w(\theta)^\alpha Q(\theta),
$$
is the R\'enyi entropy of order $\alpha$ of the responsibility distribution with respect to $Q(d\theta)$ instead of the Lebesgue measure. 
Since $H_{\alpha}(w; Q)$ is negative, it follows that $\ell_{\alpha} \geq \ell_1$. The entropy term vanishes only when the responsibilities $w$ are uniform, and is largest when the responsibilities collapse to a point mass, thereby explicitly penalising responsibility collapse. This introduces a competing pressure against the predictive mixture negative log-likelihood term $\ell_1$ which is seeking to make hard assignments of data points to parameters selected by $Q$.  Consequently, intermediate values $0 < \alpha < 1$ add an entropic regularisation effect on the responsibilities,  promoting soft specialisation.

This distinction is especially important for autoregressive language modelling.  In that case
\[
p_{\theta_i}(y\mid x)
=
\exp s_i(x,y),
\qquad
s_i(x,y)
=
\sum_{t=1}^T
\log p_{\theta_i}(y_t\mid x,y_{<t}),
\]
so the responsibilities take the form
\[
w_i^{(\alpha)}(x,y)
=
\frac{\exp(\alpha s_i(x,y))}
{\sum_{j=1}^M \exp(\alpha s_j(x,y))}.
\]
Since the sequence log-likelihood \(s_i(x,y)\) scales with the response length \(T\), even small per-token likelihood differences can produce highly concentrated responsibilities when \(\alpha=1\).  Intermediate values of \(\alpha\) therefore act as a temperature on sequence-level routing, controlling the degree of specialisation across model weights within the ensemble.

\subsection{Properties of the $\alpha$-R\'enyi posterior}

To understand how the $\alpha$-R\'enyi loss influences posterior selection, we can examine the optimal finite-sample posterior. For the Bayesian variational objective \eqref{eq:gb_objective}, i.e. in the $\alpha\rightarrow 0$ limit, admits a unique minimiser which has the usual product form
$$ 
Q^*(\theta) \propto \prod_{i=1}^N p_{\theta}(y_i \mid x_i)^\frac{1}{\lambda} \pi_0(d\theta),
$$
so that each parameter $\theta$ is weighted depending on its individual performance across all the data.  Increasing $\alpha > 0$ introduces a nonlinear coupling over the parameter space, as formalised below.   We first establish convexity of \eqref{eq:alpha_objective}.

\begin{lemma}[Convexity of the \texorpdfstring{\(\alpha\)}{alpha}-Rényi variational objective]
\label{lem:convexity_alpha_objective}
Fix \(\alpha>0\). For each data point \(z=(x,y)\), define
\[
A_z(Q)
:=
\int_\Theta p_\theta(y\mid x)^\alpha\,Q(d\theta),
\qquad
\ell_\alpha(Q;z)
=
-\frac1\alpha\log A_z(Q),
\]
on the domain where \(0<A_z(Q)<\infty\). Then \(Q\mapsto \ell_\alpha(Q;z)\) is convex on \(\mathcal P(\Theta)\).  Consequently, for a fixed dataset \(\mathcal D=\{z_i\}_{i=1}^N\), the functional \eqref{eq:alpha_objective} is convex on \(\{Q\ll\pi_0:\operatorname{KL}(Q\|\pi_0)<\infty\}\). In particular, if \(\lambda>0\), then
\(\mathcal F_{\alpha}\) is strongly convex with respect to total variation on this domain. 
\end{lemma}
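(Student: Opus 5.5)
The approach is to exploit linearity of \(Q\mapsto A_z(Q)\) and the fact that \(-\log\) is convex and non-increasing. For \(Q_0,Q_1\) in the domain (so \(0<A_z(Q_j)<\infty\)) and \(t\in[0,1]\), set \(Q_t=(1-t)Q_0+tQ_1\). Then
\[
A_z(Q_t)=(1-t)A_z(Q_0)+tA_z(Q_1)\in(0,\infty),
\]
so the domain is convex. Convexity of \(u\mapsto-\frac1\alpha\log u\) on \((0,\infty)\), which uses \(\alpha>0\), gives
\[
\ell_\alpha(Q_t;z)=-\tfrac1\alpha\log\big((1-t)A_z(Q_0)+tA_z(Q_1)\big)\le (1-t)\ell_\alpha(Q_0;z)+t\ell_\alpha(Q_1;z).
\]
This is the whole per-example argument: a convex function composed with an affine map is convex.

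Summing over \(z_i\in\mathcal D\) preserves convexity. Next, \(\operatorname{KL}(\cdot\|\pi_0)\) is convex on \(\{Q\ll\pi_0\}\). This follows either from joint convexity of KL, or directly from convexity of \(\phi(r)=r\log r\) applied pointwise to the densities \(dQ_t/d\pi_0=(1-t)\,dQ_0/d\pi_0+t\,dQ_1/d\pi_0\). The set \(\{Q\ll\pi_0:\operatorname{KL}(Q\|\pi_0)<\infty\}\) is convex by that same inequality. For \(\mathcal F_\alpha\) to make sense I would intersect it with the domain where each \(A_{z_i}\) is finite and positive, which is also convex. Hence \(\mathcal F_\alpha\) is convex there, being a sum of convex functionals with \(\lambda\ge0\).

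For strong convexity, the main step is to show that \(\operatorname{KL}(\cdot\|\pi_0)\) is strongly convex with respect to total variation. I would prove the midpoint-type inequality
\[
(1-t)\operatorname{KL}(Q_0\|\pi_0)+t\operatorname{KL}(Q_1\|\pi_0)-\operatorname{KL}(Q_t\|\pi_0)=(1-t)\operatorname{KL}(Q_0\|Q_t)+t\operatorname{KL}(Q_1\|Q_t),
\]
which is a direct computation from the log-density decomposition. I would then apply Pinsker's inequality \(\operatorname{KL}(P\|R)\ge 2\|P-R\|_{TV}^2\) together with \(Q_0-Q_t=t(Q_0-Q_1)\) and \(Q_1-Q_t=(1-t)(Q_1-Q_0)\). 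This yields the lower bound
\[
2\big((1-t)t^2+t(1-t)^2\big)\|Q_0-Q_1\|_{TV}^2=2t(1-t)\|Q_0-Q_1\|_{TV}^2,
\]
that is, strong convexity with modulus of order \(1\) (for example \(4\) in the convention \(\frac{\mu}{2}t(1-t)\|\cdot\|^2\)). Multiplying by \(\lambda>0\) and adding the convex data term gives strong convexity of \(\mathcal F_\alpha\) with modulus proportional to \(\lambda\).

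The expected obstacle is the rigor of the identity above. One needs \(\operatorname{KL}(Q_j\|Q_t)<\infty\), which holds because \(dQ_j/dQ_t\le 1/(1-t)\) or \(1/t\). One also needs integrability so that the terms can be split without \(\infty-\infty\); this is ensured by finiteness of \(\operatorname{KL}(Q_j\|\pi_0)\).
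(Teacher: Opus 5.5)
Your proposal is correct and follows the paper's proof essentially step for step: linearity of \(Q\mapsto A_z(Q)\) composed with the convex map \(u\mapsto -\alpha^{-1}\log u\), convexity of the KL term, and then strong convexity via the identity \((1-t)\operatorname{KL}(Q_0\|\pi_0)+t\operatorname{KL}(Q_1\|\pi_0)-\operatorname{KL}(Q_t\|\pi_0)=(1-t)\operatorname{KL}(Q_0\|Q_t)+t\operatorname{KL}(Q_1\|Q_t)\) combined with Pinsker and the TV scalings, which gives the same bound \(2\lambda t(1-t)\operatorname{TV}(Q_0,Q_1)^2\). Your extra remarks supply details the paper leaves implicit, namely convexity of the domain and finiteness of \(\operatorname{KL}(Q_j\|Q_t)\) via \(dQ_0/dQ_t\le 1/(1-t)\) and \(dQ_1/dQ_t\le 1/t\).
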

An immediate consequence of Lemma \ref{lem:convexity_alpha_objective} is the uniqueness of the minimiser of \eqref{eq:alpha_objective}, if it exists, and its characterisation through a self-consistency equation.

\begin{proposition}[Self-consistent \texorpdfstring{\(\alpha\)}{alpha}-posterior]
\label{prop:self_consistency_alpha}
Let \(\mathcal D=\{z_i\}_{i=1}^N\), with \(z_i=(x_i,y_i)\), be a fixed dataset.
Assume \(\lambda>0\), and consider $\mathcal{F}_{\alpha}$ defined by \eqref{eq:alpha_objective} over \(Q\ll\pi_0\). Suppose that \(\mathcal F_{\alpha}\) admits an interior
minimiser \(Q^\star_\alpha\) with density
\[
q^\star_\alpha(\theta)
:=
\frac{dQ^\star_\alpha}{d\pi_0}(\theta),
\qquad
q^\star_\alpha(\theta)>0
\quad \pi_0\text{-a.e.}
\]
Then \(Q^\star_\alpha\) is the unique minimiser of \(\mathcal F_{\alpha}\).
Moreover, its density satisfies the self-consistency equation
\begin{equation}
\label{eq:self_consistency_alpha}
q^\star_\alpha(\theta)
=
\frac{1}{Z_\alpha}
\exp\!\left(
\frac{1}{\lambda\alpha}
\sum_{i=1}^N
w_i^{(\alpha)}(\theta;Q^\star_\alpha)
\right),
\end{equation}
where \(Z_\alpha\) is the normalising constant and
\[
w_i^{(\alpha)}(\theta;Q^\star_\alpha)
:=
\frac{
p_\theta(y_i\mid x_i)^\alpha
}{
\int_\Theta
p_{\vartheta}(y_i\mid x_i)^\alpha
Q^\star_\alpha(d\vartheta)
}.
\]

Conversely, any strictly positive density \(q\) satisfying this fixed-point
equation and having finite objective value is the unique minimiser.
\end{proposition}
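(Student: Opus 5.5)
Uniqueness comes straight from Lemma~\ref{lem:convexity_alpha_objective}. Since \(\lambda>0\), \(\mathcal F_\alpha\) is strictly convex on \(\{Q\ll\pi_0:\operatorname{KL}(Q\|\pi_0)<\infty\}\): the data term is convex and \(\operatorname{KL}(\cdot\|\pi_0)\) is strictly convex. Suppose \(Q_0\neq Q_1\) are both minimisers. Then the midpoint \(\tfrac12(Q_0+Q_1)\) lies in the domain and has strictly smaller objective value, which is a contradiction. Hence \(Q^\star_\alpha\) is the unique minimiser. The remaining work is to derive \eqref{eq:self_consistency_alpha} as a first-order condition, and then to show that this condition is also sufficient.

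\textbf{First variation.} For a bounded measurable perturbation \(h\) with \(\int h\,d\pi_0=0\), set \(q_t=q^\star_\alpha+t h\). Because \(q^\star_\alpha>0\), I would restrict to \(h\) supported where \(q^\star_\alpha\ge\varepsilon\) with \(|h|\le q^\star_\alpha/2\); this keeps \(q_t\) a valid density for small \(|t|\). These directions are dense enough to conclude \(\pi_0\)-a.e. statements. The two terms differentiate as follows.
\begin{itemize}
\item Data term: \(\frac{d}{dt}\ell_\alpha(Q_t;z_i)\big|_{t=0}=-\frac1\alpha\int w_i^{(\alpha)}(\theta;Q^\star_\alpha)\,h(\theta)\,\pi_0(d\theta)\). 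This uses that \(A_{z_i}\) is linear in \(Q\) and \(0<A_{z_i}<\infty\).
\item Regulariser: \(\frac{d}{dt}\lambda\operatorname{KL}(Q_t\|\pi_0)\big|_{t=0}=\lambda\int \log q^\star_\alpha\,h\,d\pi_0\). Differentiating under the integral is justified by dominated convergence on the set \(\{q^\star_\alpha\ge\varepsilon\}\), using finiteness of the objective.
\end{itemize}
Stationarity means the sum vanishes for every admissible mean-zero \(h\). Therefore \(\lambda\log q^\star_\alpha-\frac1\alpha\sum_i w_i^{(\alpha)}\) is \(\pi_0\)-a.e. equal to a constant. Exponentiating gives \eqref{eq:self_consistency_alpha}, with \(Z_\alpha\) fixed by normalisation.

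The \textbf{main obstacle} is the technical justification of this step. One must pass from "a.e. constant on each \(\{q^\star_\alpha\ge\varepsilon\}\)" to a single global constant, by letting \(\varepsilon\downarrow0\) and noting that the constants must agree on overlapping sets. One must also check that \(w_i^{(\alpha)}\) is integrable against \(h\,d\pi_0\). This holds because \(h\) is bounded by \(q^\star_\alpha\) and \(\int w_i\,dQ^\star_\alpha=1\).

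\textbf{Converse.} Let \(q\) be a positive density satisfying the fixed-point equation, with \(\mathcal F_\alpha(Q)<\infty\). By convexity, \(t\mapsto\mathcal F_\alpha((1-t)Q+tQ')\) is convex on \([0,1]\) for any admissible \(Q'\). It therefore suffices to show that its right derivative at \(t=0\) is \(\ge0\). By the computation above, that derivative equals
\[
\int\Big(\lambda\log q-\tfrac1\alpha\textstyle\sum_i w_i^{(\alpha)}(\cdot;Q)\Big)\,(dQ'-dQ),
\]
which vanishes because the integrand is constant. For the KL term the derivative is only one-sided and may be \(-\infty\)-safe. Here I would use the standard inequality \(\operatorname{KL}(Q'\|\pi_0)-\operatorname{KL}(Q\|\pi_0)\ge\int\log q\,(dQ'-dQ)\), which follows from \(\operatorname{KL}(Q'\|Q)\ge0\). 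I would combine it with the tangent inequality for the convex term \(-\frac1\alpha\log A_z\), namely \(-\log a'\ge-\log a-(a'-a)/a\). Together these give \(\mathcal F_\alpha(Q')\ge\mathcal F_\alpha(Q)\) directly. Uniqueness then follows from the strict convexity argument above.
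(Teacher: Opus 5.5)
Your proposal is correct and follows the paper's proof. Uniqueness comes from the convexity lemma, the fixed-point equation from the first variation along mass-preserving perturbations \(q^\star_\alpha+th\), and the converse from convexity. Your extra care makes rigorous several steps the paper only sketches:
\begin{itemize}
\item you choose perturbations that keep \(q_t\) a positive density;
\item you check integrability of \(w_i^{(\alpha)}\) and \(\log q^\star_\alpha\);
\item for the converse, you use the explicit supporting inequalities \(\operatorname{KL}(Q'\|Q)\ge 0\) and \(-\log a'\ge -\log a-(a'-a)/a\), rather than appealing to a vanishing first variation plus convexity.
\end{itemize}
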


Note that we do not establish existence of this minimiser, which would require establishing suitable compactness/coercivity/lower semi-continuity assumptions.

From \eqref{eq:self_consistency_alpha}, we can see that each observation  induces a local reweighting of the current posterior toward those parameter settings that explain it well.  The final posterior  reconciles all such local predictive tilts simultaneously. Thus $Q_{\alpha}^\star$ is best understood as a \emph{self-consistent predictive posterior}, in contrast to the generalised Bayes posterior which concentrates over individually plausible parameter values.  To understand the difference arising from this nonlinear relationship for $Q$ in \eqref{eq:self_consistency_alpha} we consider the following illustrative example.

\begin{example}
\label{ex:posteriors}
Consider a finite parameter space $\Theta=\{g,a,b\}$, with a uniform prior \(\pi_0\). Suppose we have a dataset comprising two observations
\(z_1,z_2\), with associated likelihoods 
\[
\begin{array}{c|cc}
 & z_1 & z_2 \\
\hline
g & m & m \\
a & h & \varepsilon \\
b & \varepsilon & h
\end{array}
\qquad
0<\varepsilon<m<h<1.
\]
Thus we can consider the model associated with  \(g\) is a generalist, while \(a\) and \(b\) are specialists for
\(z_1\) and \(z_2\), respectively.  Let \(Q\) assign probabilities
\[
Q(g)=q_g,
\qquad
Q(a)=q_a,
\qquad
Q(b)=q_b.
\]
Because the likelihood table is invariant under simultaneously swapping
\(a\leftrightarrow b\) and \(z_1\leftrightarrow z_2\), any fixed point of \eqref{eq:self_consistency_alpha} would be symmetric, so we can just assume that 
\[
Q(a)=Q(b)=s,
\qquad
Q(g)=1-2s,
\qquad
0<s<\frac12.
\]
For this symmetric posterior candidate, the two normalising denominators in
the responsibility weights are equal
\[
A_\alpha(Q)
:=
\sum_{\theta\in\Theta} Q(\theta)p_\theta(z_1)^\alpha
=
\sum_{\theta\in\Theta} Q(\theta)p_\theta(z_2)^\alpha
=
q_g m^\alpha+s(h^\alpha+\varepsilon^\alpha).
\]
The self-consistency equation gives
\[
Q_\alpha^\star(\theta)
\propto
\exp\left\{
\frac{1}{\lambda\alpha}
\sum_{i=1}^2
\frac{p_\theta(z_i)^\alpha}{A_\alpha(Q_\alpha^\star)}
\right\}.
\]
Therefore the ratio of the posterior mass assigned to a specialist and to the
generalist is
\[
\frac{Q_\alpha^\star(a)}{Q_\alpha^\star(g)}
=
\exp\left\{
\frac{1}{\lambda\alpha A_\alpha(Q_\alpha^\star)}
\left[
h^\alpha+\varepsilon^\alpha-2m^\alpha
\right]
\right\}.
\]
The same expression holds for \(b\). Hence
\[
Q_\alpha^\star(a)>Q_\alpha^\star(g)
\qquad\Longleftrightarrow\qquad
h^\alpha+\varepsilon^\alpha>2m^\alpha.
\]
Thus, at the posterior level, the \(\alpha>0\) fixed point assigns more mass to
each specialist than to the generalist exactly when the \(\alpha\)-power mean of
the specialist likelihoods exceeds the generalist likelihood.  In the $\alpha\rightarrow 0$ limit, the objective reduces
to the usual Gibbs posterior,
\[
Q_0^\star(\theta)
\propto
\pi_0(\theta)
\exp\left\{
\frac{1}{\lambda}\sum_{i=1}^2\log p_\theta(z_i)
\right\}.
\]
With a uniform prior, the specialist-to-generalist posterior/likelihood ratio is therefore
\[
\frac{Q_0^\star(a)}{Q_0^\star(g)}
=
\left(
\frac{h\varepsilon}{m^2}
\right)^{1/\lambda}.
\]

Consequently, if $m^2>h\varepsilon$, then the classical Gibbs posterior favours the generalist.  On the other hand, for $\alpha > 0$, the self-consistent
posterior favours the specialists when
\[
m^2>h\varepsilon
\qquad\text{and}\qquad
h^\alpha+\varepsilon^\alpha>2m^\alpha.
\]
For example, with
\[
h=0.9,\qquad \varepsilon=0.01,\qquad m=0.3,
\]
we have \(m^2=0.09>0.009=h\varepsilon\),
so the \(\alpha=0\) Gibbs posterior favours the generalist. At the other extreme when
\(\alpha=1\),
\[
h+\varepsilon=0.91>0.6=2m,
\]
so the self-consistent \(\alpha=1\) posterior favours the specialists. The
transition occurs when
\[
h^\alpha+\varepsilon^\alpha=2m^\alpha,
\]
which for these values gives \(\alpha_{\mathrm{critical}}\approx 0.56\).
\end{example}

Example \ref{ex:posteriors} shows that the nonlinearity of the \(\alpha>0\) objective changes the posterior geometry itself, assigning mass according to self-consistent responsibility scores.  In this setting, a parameter can receive high posterior
mass because it explains a subset of the observations very well relative to the current
population \(Q_\alpha^\star\).

Another implication of the non-linearity in Eq. \eqref{eq:self_consistency_alpha} for $\alpha > 0$ is that, unlike for generalised Bayesian inference, we cannot rely on sequential or single-state sampling to compute the posterior distribution. In the generalised Bayesian settings ($\alpha \to 0$), the unnormalised posterior density at a single state $\theta$ depends only on the prior and the parameter's individual likelihood, which naturally permits the use of standard MCMC methods. In contrast, the responsibilities within the $\alpha$-R\'enyi posterior couple the state space, meaning parameter configurations cannot be evaluated in isolation. Consequently, the entire distribution must be evolved simultaneously to dynamically reconcile the interdependent predictive tilts. This requirement establishes a natural connection to other particle variational inference methods based on interacting particle systems, such as Stein Variational Gradient Descent (SVGD) \cite{svgd_2016}, where an ensemble of particles is evolved in parallel. However, whereas SVGD typically enforces distributional spread through an explicit repulsive kernel in the gradient updates, the $\alpha$-Rényi framework induces interaction and structural diversity directly through the non-linear routing within the objective itself.

While the $\alpha$-R\'enyi posterior differs in structure compared to generalised Bayesian posteriors over finitely many observations, we can show that they are consistent with respect to each other at the population level.   In the idealised setting where the true data-generating distribution lies strictly within our model class (i.e. well-specified case), the $\alpha$-R\'enyi population loss is still minimised by the true parameter, so that the ensemble is not forced to diversify even when a single, perfect predictor exists.  This is formalised in the following proposition, which builds on \cite[Lemma 2]{masegosa2020learning}.

\begin{proposition}[Well-specified population minimisers]
\label{prop:well_specified_minimizers}
Assume the model is well-specified, so that there exists \(\theta^\star\in\Theta\) with
$p_{\theta^\star}(\cdot\mid x)=p^\star(\cdot\mid x)
\quad \text{for }P_X^\star\text{-a.e. }x.
$
Then \(\delta_{\theta^\star}\) is a minimiser of the population risk $
\mathcal R_\alpha(Q):=\mathbb{E}_{P^\star}[\ell_\alpha(Q;X,Y)]$,
over \(Q \in \mathcal{P}(\Theta)\), for every \(\alpha\in(0,1]\).

For \(\alpha=1\), every minimiser satisfies \(p_Q=p^\star\), whereas for \(0<\alpha<1\), every minimiser  is supported on the exact-fit set
\[
\Theta^\star
:=
\left\{
\theta\in\Theta:
p_\theta(\cdot\mid x)
=
p^\star(\cdot\mid x)
\text{ for }P_X^\star\text{-a.e. }x
\right\}.
\]
\end{proposition}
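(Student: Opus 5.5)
The plan is to sandwich the population risk between two quantities that both attain their minimum at \(\delta_{\theta^\star}\). By the interpolation inequality in Lemma~\ref{lem:alpha-interpolation}, for every \(Q\) and every \((x,y)\) we have \(\ell_\alpha(Q;x,y)\ge -\log p_Q(y\mid x)\). Taking expectations under \(P^\star\) gives \(\mathcal R_\alpha(Q)\ge \mathbb E_{P^\star}[-\log p_Q(Y\mid X)]\). The right-hand side is a cross-entropy, and by Gibbs' inequality (nonnegativity of \(\mathbb E_{P_X^\star}\operatorname{KL}(p^\star(\cdot\mid X)\|p_Q(\cdot\mid X))\)) it is bounded below by the conditional entropy \(H^\star:=\mathbb E_{P^\star}[-\log p^\star(Y\mid X)]\). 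For \(Q=\delta_{\theta^\star}\), \(\ell_\alpha(\delta_{\theta^\star};x,y)=-\log p_{\theta^\star}(y\mid x)\) for every \(\alpha\), so \(\mathcal R_\alpha(\delta_{\theta^\star})=H^\star\). This proves that \(\delta_{\theta^\star}\) is a minimiser with value \(H^\star\). I will assume \(H^\star\) is finite, and I will restrict attention to \(Q\) for which \(\mathcal R_\alpha(Q)\) is well defined.

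For the characterisation of all minimisers, let \(Q\) satisfy \(\mathcal R_\alpha(Q)=H^\star\). Then both inequalities in the chain are equalities. Equality in Gibbs' inequality forces \(\operatorname{KL}(p^\star(\cdot\mid x)\|p_Q(\cdot\mid x))=0\) for \(P_X^\star\)-a.e.\ \(x\), that is, \(p_Q=p^\star\). This already settles the case \(\alpha=1\). For \(0<\alpha<1\) we additionally have \(\mathbb E_{P^\star}[\ell_\alpha(Q;X,Y)-\ell_1(Q;X,Y)]=0\), and the integrand is nonnegative by Lemma~\ref{lem:alpha-interpolation}. Hence \(\ell_\alpha(Q;x,y)=\ell_1(Q;x,y)\) for \(P^\star\)-a.e.\ \((x,y)\).

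By the strict monotonicity clause of Lemma~\ref{lem:alpha-interpolation} (or directly from strict Jensen for the strictly concave map \(t\mapsto t^\alpha\), equivalently the R\'enyi-entropy identity \(\ell_\alpha=\ell_1-\frac{1-\alpha}{\alpha}H_\alpha(w;Q)\)), this equality forces \(\theta\mapsto p_\theta(y\mid x)\) to be \(Q\)-a.s.\ constant. Its constant value must then be \(p_Q(y\mid x)=p^\star(y\mid x)\). So for \(P^\star\)-a.e.\ \((x,y)\) we get \(p_\theta(y\mid x)=p^\star(y\mid x)\) for \(Q\)-a.e.\ \(\theta\).

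The main obstacle is the next step: exchanging the quantifiers to obtain a single \(Q\)-full set of \(\theta\) on which the identity holds for \(P^\star\)-a.e.\ \((x,y)\). I would handle it with Fubini–Tonelli applied to the jointly measurable set \(\{(\theta,x,y): p_\theta(y\mid x)\neq p^\star(y\mid x)\}\), which has \(Q\otimes P^\star\)-measure zero. This yields, for \(Q\)-a.e.\ \(\theta\), that \(p_\theta(y\mid x)=p^\star(y\mid x)\) for \(P^\star\)-a.e.\ \((x,y)\). To upgrade from \(P^\star\)-a.e.\ \(y\) given \(x\) to equality of the conditional distributions, I would use that both are probability measures in \(y\) dominated by a common reference measure. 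Equality of densities on the support of \(p^\star(\cdot\mid x)\) means \(p_\theta(\cdot\mid x)\) assigns that support total mass one, so the densities agree a.e. Hence \(Q(\Theta^\star)=1\). Measurability of \(\Theta^\star\) and joint measurability of \((\theta,x,y)\mapsto p_\theta(y\mid x)\) are standing regularity assumptions that I would state explicitly.
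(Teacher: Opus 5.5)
Your proof is correct and follows the same route as the paper: the chain $\mathcal R_\alpha(Q)\ge\mathcal R_1(Q)\ge\mathbb E_{P^\star}[-\log p^\star(Y\mid X)]$, the bound being attained at $\delta_{\theta^\star}$, and the equality analysis via Gibbs' inequality and strict Jensen for $t\mapsto t^\alpha$. Two of your steps make explicit what the paper's proof passes over in a single line: the Fubini--Tonelli exchange of the quantifiers over $\theta$ and $(x,y)$, and the upgrade from $P^\star$-a.e.\ agreement of densities to equality of the conditional laws.
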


\subsection{Stability under misspecification}\label{subsec:stability_misspecification}

To better understand how the $\alpha$-objective departs from the classical variational Bayes regime beyond the well-specified limit, we first study its local behaviour near $\alpha=0$. Let $L_\theta(x,y):=-\log p_\theta(y\mid x)$ denote the negative log-likelihood.  For sufficiently small $\alpha>0$, we can show that the lower bound in Lemma \ref{lem:alpha_jensen_gap_variance_bound} is tight, by taking a Taylor expansion of the cumulant generating function. This yields
\begin{equation}
\ell_\alpha(Q;x,y)
=
\ell_{0}(Q; x,y)
-
\frac{\alpha}{2}\operatorname{Var}_Q\!\big(L_\theta(x,y)\big)
+
\mathcal O(\alpha^2),
\label{eq:cumulant_expansion}
\end{equation}
where $\ell_0(Q; x, y) = \mathbb E_Q[L_\theta(x,y)]$ is the expected negative log-loss.  Eq.~\eqref{eq:cumulant_expansion} makes clear that positive \(\alpha\) introduces a first-order correction to the classical variational objective, which permits dispersion in the per-example loss under \(Q\).  Similar bounds have been derived in previous works, seeking to introduce second order corrections to Jensen's inequality, \cite{masegosa2020learning,liao2019sharpening,becker2012variance}.  Intuitively, this expansion shows that amongst distributions with comparable mean loss, positive $\alpha$ will favour ones whose ensemble members make different contributions to prediction.

The following result formalises this, showing that if $Q$ is close to a Dirac measure $\delta_{\theta}$, then inflating posterior variance will increase the population $\alpha$-risk, governed by a local stability operator.

\begin{proposition}[Local expansion around a Dirac posterior]
\label{prop:local_dirac_expansion}
Let \(z=(x,y)\), $P^\star$ be the data generating distribution, and assume \(L_\vartheta(z)=-\log p_\vartheta(y\mid x)\) is twice continuously differentiable in \(\vartheta\). Let \(Q\) be a probability measure on \(\Theta\) with mean \(\theta\) and covariance \(\Sigma\), with \(\Sigma\to 0\). Then
\begin{equation*}
\ell_\alpha(Q;z)
=
L_\theta(z)
+
\frac12 {\operatorname{Tr}}\big(\nabla^2 L_\theta(z)\Sigma\big)
-
\frac{\alpha}{2}\nabla \log p_\theta(z)^\top \Sigma \,\nabla \log p_\theta(z)
+
o(\|\Sigma\|).
\end{equation*}
Consequently, the population \(\alpha\)-risk admits the expansion
\begin{equation}
\mathcal R_\alpha(Q)
=
\mathbb{E}_{P^\star}[-\log p_\theta(Z)]
+
\frac12 {\operatorname{Tr}}\big((V(\theta)-\alpha J(\theta))\Sigma\big)
+
o(\|\Sigma\|),
\label{eq:population_local_dirac_expansion}
\end{equation}
where
\[
V(\theta)
:=
\mathbb{E}_{P^\star}\!\left[\nabla^2(-\log p_\theta(Z))\right],
\qquad
J(\theta)
:=
\mathbb{E}_{P^\star}\!\left[
\nabla \log p_\theta(Z)\nabla \log p_\theta(Z)^\top
\right].
\]
\end{proposition}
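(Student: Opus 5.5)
We reduce the statement to a second-order expansion of the two moments $\mathbb E_Q[p_\vartheta^\alpha]$ and then take logarithms. Fix $z$ and write $f(\vartheta):=p_\vartheta(y\mid x)^\alpha=\exp\{-\alpha L_\vartheta(z)\}$, which is $C^2$ because $L$ is. Since $\Sigma\to0$, we interpret the limit as $Q$ concentrating near its mean $\theta$. The minimal extra assumption we will state explicitly is that third-order remainders are controlled, for instance that $Q$ is supported in a shrinking neighbourhood of $\theta$, or that $\mathbb E_Q\|\vartheta-\theta\|^{3}=o(\|\Sigma\|)$ together with a local bound on $\nabla^2 f$. Under this, a Taylor expansion of $f$ at $\theta$ with integral remainder gives
\[
\mathbb E_Q[f(\vartheta)] = f(\theta) + \nabla f(\theta)^\top \mathbb E_Q[\vartheta-\theta] + \tfrac12\operatorname{Tr}\big(\nabla^2 f(\theta)\Sigma\big) + o(\|\Sigma\|).
\]
The linear term vanishes because $\theta$ is the mean of $Q$. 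The remainder is $o(\|\Sigma\|)$ by continuity of $\nabla^2 f$ at $\theta$ together with $\mathbb E_Q\|\vartheta-\theta\|^2=\operatorname{Tr}\Sigma$, via a dominated/uniform continuity argument. We expect justifying this uniformity to be the main obstacle. Without support or moment control, mass escaping to regions where $f$ is large could spoil the expansion, so we will make this assumption explicit rather than prove it in generality.

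Next we compute the derivatives of $f$. Writing $g:=\nabla\log p_\theta(z)=-\nabla L_\theta(z)$, we have
\[
\nabla f=\alpha f\, g, \qquad \nabla^2 f = f\big(\alpha^2 g g^\top - \alpha\nabla^2 L_\theta(z)\big).
\]
Hence
\[
\mathbb E_Q f = f(\theta)\Big(1+\tfrac{\alpha^2}{2}g^\top\Sigma g-\tfrac{\alpha}{2}\operatorname{Tr}(\nabla^2L_\theta\Sigma)+o(\|\Sigma\|)\Big).
\]
Taking $-\alpha^{-1}\log$ and using $\log(1+u)=u+O(u^2)$ with $u=O(\|\Sigma\|)$, and $-\alpha^{-1}\log f(\theta)=L_\theta(z)$, we obtain
\[
\ell_\alpha(Q;z)=L_\theta(z)+\tfrac12\operatorname{Tr}(\nabla^2L_\theta\Sigma)-\tfrac{\alpha}{2}g^\top\Sigma g+o(\|\Sigma\|),
\]
which is the first claim.

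For the population statement we take expectations under $P^\star$. We use $g^\top\Sigma g=\operatorname{Tr}(gg^\top\Sigma)$ and linearity of trace and expectation to identify $\mathbb E_{P^\star}[\nabla^2L_\theta]=V(\theta)$ and $\mathbb E_{P^\star}[gg^\top]=J(\theta)$. This gives \eqref{eq:population_local_dirac_expansion}, provided the pointwise $o(\|\Sigma\|)$ remainder can be integrated. For that we assume an integrable envelope: a $P^\star$-integrable bound on the local modulus of continuity of $\nabla^2 L$ and on $\|\nabla L\|^2$ near $\theta$. Dominated convergence then applies to $\text{remainder}/\|\Sigma\|$. This interchange is the second technical point. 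We would state it as a standing regularity assumption, in the same spirit as the integrability hypotheses of Lemma~\ref{lem:alpha-interpolation}.
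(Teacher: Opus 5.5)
Your proposal is correct and follows essentially the same route as the paper: a second-order Taylor expansion of $f=e^{-\alpha L}$ about the mean of $Q$ (the linear term vanishes), the Hessian identity $\nabla^2 f = f(\alpha^2 gg^\top-\alpha\nabla^2 L_\theta)$, the step $\log(1+u)=u+O(u^2)$, and then $P^\star$-expectations to identify $V$ and $J$. The extra hypotheses you state explicitly are genuinely needed and are only implicit in the paper's proof: concentration of $Q$ beyond $\Sigma\to0$, so that the $C^2$ Taylor remainder is actually $o(\|\Sigma\|)$, and an integrable envelope to pass the remainder through $\mathbb E_{P^\star}$. To see why, a vanishing mass $\epsilon$ placed at a fixed distance from $\theta$ gives $\Sigma=O(\epsilon)$ but also an $O(\epsilon)$ error, which is not $o(\|\Sigma\|)$.
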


\begin{remark}[The \texorpdfstring{$V-\alpha J$}{V-alpha J} stability matrix and \texorpdfstring{$\alpha_{\text{critical}}$}{alpha critical}]\label{rem:stability_matrix}
Equation~\eqref{eq:population_local_dirac_expansion} shows that
\(V(\theta)-\alpha J(\theta)\) governs the local stability of a concentrated
population minimiser against infinitesimal spread in \(Q\)-space. In the well-specified case, Bartlett's identity implies $V(\theta^\star)=J(\theta^\star)=I(\theta^\star)$, resulting in a first-order variance penalty  proportional to $(1-\alpha)I(\theta^\star)$.
Thus, for \(\alpha<1\), Dirac posteriors remain locally stable to first order, while at \(\alpha=1\) this first-order stability vanishes. More generally, under misspecification, if \(V(\theta^\star)-\alpha J(\theta^\star)\) ceases to be positive definite, then small non-degenerate perturbations of the Dirac posterior may reduce the population risk, indicating that the objective can locally prefer spread over concentration at a single parameter value.

We can explicitly characterise the critical threshold $\alpha_{\text{\emph{critical}}}$ at which posterior spread becomes locally favourable under general misspecification. A concentrated posterior at $\theta^\star$ becomes unstable to spread in a direction $u \in \mathbb{R}^d$ if $u^\top (V(\theta^\star) - \alpha J(\theta^\star)) u < 0$. Assuming the Fisher information matrix $J(\theta^\star)$ is positive definite, the threshold for this phase transition is determined by the minimum of the generalised Rayleigh quotient
\begin{equation}
    \alpha_{\text{\emph{critical}}} = \min_{u \neq 0} \frac{u^\top V(\theta^\star) u}{u^\top J(\theta^\star) u}.
    \label{eq:alpha_critical}
\end{equation}
\end{remark}

\begin{corollary}[Local expansion with prior-potential regularisation]
\label{cor:local_dirac_with_prior}
Under the conditions of Proposition~\ref{prop:local_dirac_expansion}, suppose the prior \(\pi_0\) has twice continuously differentiable log-density in a neighbourhood of \(\theta\). Letting $U(\theta):=-\log \pi_0(\theta)$, we define the prior-regularised population functional $\mathcal F_\alpha(Q)
:=
\mathcal R_\alpha(Q)+\lambda\,\mathbb E_Q[U(\theta)]$, where $\mathbb E_Q[U(\theta)]$ is the local approximation to $\operatorname{KL}(Q || \pi_0)$ for sharply concentrated absolutely continuous posteriors.
Then, for a probability measure \(Q\) with mean \(\theta\) and covariance \(\Sigma\to 0\),
\[
\mathcal F_\alpha(Q)
=
\mathbb{E}_{P^\star}\!\left[-\log p_\theta(Z)\right]+\lambda U(\theta)
+
\frac12 \operatorname{Tr}\!\Big(\big(V(\theta)-\alpha J(\theta)+\lambda \nabla^2 U(\theta)\big)\Sigma\Big)
+
o(\|\Sigma\|).
\]
\end{corollary}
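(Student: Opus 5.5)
The plan is to reduce the corollary to Proposition~\ref{prop:local_dirac_expansion} and add a separate second-order expansion of the prior term. By definition, $\mathcal F_\alpha(Q)=\mathcal R_\alpha(Q)+\lambda\,\mathbb E_Q[U(\vartheta)]$ is a sum of two functionals. The first is already handled by \eqref{eq:population_local_dirac_expansion}, which gives
\[
\mathcal R_\alpha(Q)=\mathbb E_{P^\star}[-\log p_\theta(Z)]+\tfrac12\operatorname{Tr}\big((V(\theta)-\alpha J(\theta))\Sigma\big)+o(\|\Sigma\|).
\]
So it only remains to show that
\[
\mathbb E_Q[U(\vartheta)]=U(\theta)+\tfrac12\operatorname{Tr}\big(\nabla^2U(\theta)\Sigma\big)+o(\|\Sigma\|),
\]
and then to add the two expansions after multiplying the second by $\lambda$. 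Linearity of the trace then merges the quadratic terms into $\tfrac12\operatorname{Tr}\big((V-\alpha J+\lambda\nabla^2U)\Sigma\big)$.

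For the prior term, I will Taylor expand $U$ around the mean $\theta$ with an integral remainder:
\[
U(\vartheta)=U(\theta)+\nabla U(\theta)^\top(\vartheta-\theta)+\tfrac12(\vartheta-\theta)^\top\nabla^2U(\xi_\vartheta)(\vartheta-\theta),
\]
valid for $\vartheta$ in the neighbourhood $B_r(\theta)$ where $U\in C^2$. Taking the $Q$-expectation, the linear term vanishes because $Q$ has mean $\theta$. The quadratic term equals $\tfrac12\operatorname{Tr}(\nabla^2U(\theta)\Sigma)$ plus the error $\tfrac12\mathbb E_Q[(\vartheta-\theta)^\top(\nabla^2U(\xi_\vartheta)-\nabla^2U(\theta))(\vartheta-\theta)]$. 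To make this error $o(\|\Sigma\|)$, I split the integral at radius $\delta$. On $\{\|\vartheta-\theta\|\le\delta\}$, continuity of $\nabla^2U$ bounds the error by $\omega(\delta)\operatorname{Tr}\Sigma$, where $\omega$ is the modulus of continuity. On the complement, Chebyshev gives $Q(\|\vartheta-\theta\|>\delta)\le\operatorname{Tr}\Sigma/\delta^2$, and the contribution must be shown to be $o(\|\Sigma\|)$. Letting $\Sigma\to0$ first and then $\delta\to0$ gives the claim.

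The main obstacle is this tail contribution. Here $U$ is only assumed to be $C^2$ locally, so $\mathbb E_Q[U]$ over the region far from $\theta$ is not automatically controlled by $\Sigma$. I expect to handle it the same way the proof of Proposition~\ref{prop:local_dirac_expansion} handles the likelihood term. That means adopting the same regime for "$\Sigma\to0$": either $Q$ is eventually supported in a fixed small ball around $\theta$ (as for the Gaussian-type local perturbations the remark has in mind, after truncation), or a uniform integrability condition on $U$ and $\|\vartheta-\theta\|^2$ under the family of $Q$'s, so that the tail terms are $o(\operatorname{Tr}\Sigma)$. I would state this interpretation explicitly, noting that $\operatorname{Tr}\Sigma\le d\|\Sigma\|$ so the remainder order matches. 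Beyond this, the corollary is the direct sum of the two expansions, and the identification of $\mathbb E_Q[U]$ with the local approximation to $\operatorname{KL}(Q\|\pi_0)$ is part of the definition and needs no proof.
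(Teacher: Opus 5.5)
Your proposal is correct and matches the paper's intended argument: the paper gives no separate proof of this corollary, which follows by adding $\lambda$ times the second-order Taylor expansion of $U$ about the mean (linear term vanishing since $Q$ has mean $\theta$) to the expansion of Proposition~\ref{prop:local_dirac_expansion}, and the paper itself simply integrates the $o(\|\vartheta-\theta\|^2)$ remainder without addressing tails. Your caution on the tails is warranted, but of your two proposed regimes only the uniform-integrability one suffices: support in a ball of fixed radius does not, since $U(\vartheta)=\vartheta^4$ with $Q=(1-p)\delta_0+\tfrac p2(\delta_{1/2}+\delta_{-1/2})$ gives an error of exactly $\operatorname{Tr}\Sigma/4$, which is not $o(\operatorname{Tr}\Sigma)$; what you need is $\mathbb E_Q[\|\vartheta-\theta\|^2\mathbf 1\{\|\vartheta-\theta\|>\delta\}]=o(\operatorname{Tr}\Sigma)$ for each $\delta>0$, which holds for scaling families such as $\mathcal N(\theta,\Sigma)$.
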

Thus in the presence of smooth prior regularisation, the local stability of a concentrated posterior is governed by the matrix $V(\theta)-\alpha J(\theta)+\lambda \nabla^2 U(\theta)$.  The prior curvature therefore acts as an additional force favouring concentration.
In the well-specified case, where \(V(\theta^\star)=J(\theta^\star)=I(\theta^\star)\), this becomes $(1-\alpha)I(\theta^\star)+\lambda \nabla^2 U(\theta^\star)$.

While the previous two results are stated at the population level to demonstrate the underlying statistical mechanism, the same local expansion applies to the empirical risk. This is achieved by replacing the population matrices $V$ and $J$ with their sample analogues
\[
\widehat V_N(\theta)
=
\frac1N\sum_{i=1}^N \nabla^2 L_\theta(z_i),
\qquad
\widehat J_N(\theta)
=
\frac1N\sum_{i=1}^N
\nabla \log p_\theta(z_i)\nabla \log p_\theta(z_i)^\top,
\]
so that the finite-sample objective has its own analogous local stability matrix
\[
\widehat V_N(\theta)-\alpha \widehat J_N(\theta),
\]
and an empirical critical value
\[
\widehat\alpha_{\mathrm{critical}}
=
\min_{u\neq0}
\frac{u^\top \widehat V_N(\theta)u}
{u^\top \widehat J_N(\theta)u}.
\]
In high-dimensional adapter spaces, this critical value can be efficiently estimated in a low-dimensional subspace using Hessian-vector and empirical Fisher-vector products.

To analyse the effect of $\alpha$ on local stability, we examine a simple example that illustrates the mechanism described in Proposition \ref{prop:local_dirac_expansion}, where everything can be computed closed form. In particular, we explicitly show how a positive value of \(\alpha\) can make a non-degenerate posterior preferable under contamination, and how this generates inflated predictive uncertainty in regions where the contamination is present.

\begin{example}[A two-regime linear-Gaussian example]
\label{example:stability}
Let \(X\in[-1,1]\), and consider a linear-Gaussian conditional model 
\begin{equation}
Y \mid X=x,\theta \sim \mathcal N\!\big(\phi(x)^\top \theta,\sigma^2\big),
\qquad
\phi(x):=\begin{pmatrix}x \\ x_+\end{pmatrix},
\qquad
x_+:=\max(x,0),
\label{eq:example_two_regime_model}
\end{equation}
with parameter \(\theta=(\theta_1,\theta_2)^\top \in \mathbb R^2\). The first feature \(x\) captures a global slope, while the second feature \(x_+\) allows the model to alter its behaviour only on the positive half-line.
We assume a Gaussian posterior ansatz
\[
Q=\mathcal N(m,\Sigma),
\qquad
m\in\mathbb R^2,\;\Sigma\in\mathbb R^{2\times 2},\;\Sigma\succeq 0.
\]
For this model,
\[
p_\theta(y\mid x)^\alpha
=
(2\pi\sigma^2)^{-\alpha/2}
\exp\!\left(
-\frac{\alpha}{2\sigma^2}(y-\phi(x)^\top\theta)^2
\right),
\]
and integrating over \(\theta\sim Q\) yields
\[
\int p_\theta(y\mid x)^\alpha\,Q(d\theta)
=
(2\pi\sigma^2)^{-\alpha/2}
\left(1+\frac{\alpha\,v_Q(x)}{\sigma^2}\right)^{-1/2}
\exp\!\left(
-\frac{\alpha (y-\phi(x)^\top m)^2}{2(\sigma^2+\alpha v_Q(x))}
\right),
\]
where
\[
v_Q(x):=\phi(x)^\top \Sigma \phi(x).
\]
Hence the pointwise \(\alpha\)-loss is
\begin{equation}
\ell_\alpha(Q;x,y)
=
\frac12 \log(2\pi\sigma^2)
+
\frac{1}{2\alpha}\log\!\left(1+\frac{\alpha\,v_Q(x)}{\sigma^2}\right)
+
\frac{(y-\phi(x)^\top m)^2}{2(\sigma^2+\alpha v_Q(x))}.
\label{eq:two_regime_alpha_loss}
\end{equation}

Suppose \(X\sim \mathrm{Unif}[-1,1]\), we define the true data-generating distribution $P^\star$ by 
\begin{equation*}
Y \mid X=x \sim
\begin{cases}
\mathcal N(\beta x,\sigma^2), & x<0, \\[4pt]
(1-\varepsilon)\,\mathcal N(\beta x,\sigma^2)
+\varepsilon\,\mathcal N((\beta+a)x,\sigma^2), & x\geq 0,
\end{cases}
\label{eq:two_regime_truthh}
\end{equation*}
with \(\varepsilon\in(0,1)\) and \(a\neq 0\). Thus the positive half-line contains a contaminated component, while the negative half-line remains clean. The conditional mean under \(P^\star\) is
\[
\mathbb E[Y\mid X=x]
=
\beta x + \varepsilon a\,x_+,
\]
so the pseudo-true conditional mean is represented exactly by
\[
m^\star
=
\begin{pmatrix}
\beta\\ \varepsilon a
\end{pmatrix}.
\]
However, the conditional variance under \(P^\star\) is
\[
\mbox{Var}(Y\mid X=x)
=
\sigma^2+\varepsilon(1-\varepsilon)a^2 x_+^2,
\]
which is heteroscedastic and therefore cannot be represented by the homoscedastic model \eqref{eq:example_two_regime_model}. 

Since the Gaussian negative log-likelihood is quadratic, the local stability objects can be computed explicitly at \(m^\star\). First,
\[
L_\theta(x,y)
=
\frac{(y-\phi(x)^\top\theta)^2}{2\sigma^2}
+\frac12\log(2\pi\sigma^2),
\]
and its Hessian
\[
\nabla_\theta^2 L_\theta(x,y)
=
\frac{1}{\sigma^2}\phi(x)\phi(x)^\top.
\]
Hence,
\begin{equation*}
V(m^\star)
=
\frac{1}{\sigma^2}\,\mathbb{E}\!\left[\phi(X)\phi(X)^\top\right].
\label{eq:two_regime_V}
\end{equation*}

On the other hand, the score is given by
\[
\nabla_\theta \log p_\theta(y\mid x)
=
\frac{y-\phi(x)^\top\theta}{\sigma^2}\,\phi(x),
\]
so the Fisher information results in 
\begin{equation*}
J(m^\star)
=
\frac{1}{\sigma^4}\,\mathbb{E}\!\left[
\mbox{Var}(Y\mid X)\,\phi(X)\phi(X)^\top
\right]
=
\frac{1}{\sigma^2}\mathbb{E}[\phi(X)\phi(X)^\top]
+
\frac{\varepsilon(1-\varepsilon)a^2}{\sigma^4}
\mathbb{E}\!\left[X_+^2\,\phi(X)\phi(X)^\top\right].
\label{eq:two_regime_J}
\end{equation*}
Therefore the local spread-direction stability matrix is
\begin{equation}
M_\alpha
:=
V(m^\star)-\alpha J(m^\star)
=
\frac{1-\alpha}{\sigma^2}\mathbb{E}[\phi(X)\phi(X)^\top]
-
\frac{\alpha\,\varepsilon(1-\varepsilon)a^2}{\sigma^4}
\mathbb{E}\!\left[X_+^2\,\phi(X)\phi(X)^\top\right].
\label{eq:example_two_regime_Malpha}
\end{equation}

Proposition~\ref{prop:local_dirac_expansion}, for a small covariance perturbation \(\Sigma\) around the Dirac posterior at \(m^\star\),
\[
\mathcal R_\alpha(Q)
=
R(m^\star)
+
\frac12 \operatorname{Tr}(M_\alpha \Sigma)
+
o(\|\Sigma\|).
\]
Thus the most favourable direction of posterior spread is given by the eigenvector of \(M_\alpha\) associated with its smallest eigenvalue. If all eigenvalues are positive, the Dirac posterior is locally stable. If the smallest eigenvalue becomes negative, then spreading in the corresponding eigendirection lowers the population \(\alpha\)-risk.

Equation~\eqref{eq:example_two_regime_Malpha} shows why contamination on the positive half-line creates a selective instability. The contamination-specific correction is proportional to
\[
\mathbb{E}\!\left[X_+^2\,\phi(X)\phi(X)^\top\right],
\]
which acts only where \(x\ge 0\). Consequently, the smallest-eigenvalue eigendirection of \(M_\alpha\) is biased toward the regime-specific coordinate associated with \(x_+\), so that the contamination perturbs the posterior geometry anisotropically,  encouraging spread along the parameter direction that controls predictions on the contaminated region.

To find the most favourable direction of posterior spread, we restrict attention to covariance perturbations of the form $\Sigma = s^2 uu^\top$, where $u$ is a unit vector. By Remark \ref{rem:stability_matrix}, the exact threshold for instability along any specific direction $u$ is given by the generalised Rayleigh quotient \eqref{eq:alpha_critical}.
Solving the eigenvalue problem $\det(V(m^\star) - \alpha J(m^\star)) = 0$ reveals that the principal eigenvector (associated to $\alpha_{\text{critical}}$) is precisely the contamination-specific direction $u = e_2 = \begin{pmatrix} 0 & 1 \end{pmatrix}^\top$. To compute the absolute critical threshold, we evaluate the Rayleigh quotient for this optimal direction. The numerator and denominator are given by 
\begin{align*}
    u^\top V(m^\star) u = \frac{1}{\sigma^2}\mathbb{E}[X_+^2], \quad 
    u^\top J(m^\star) u = \frac{1}{\sigma^2}\mathbb{E}[X_+^2] + \frac{\varepsilon(1-\varepsilon)a^2}{\sigma^4}\mathbb{E}[X_+^4].
\end{align*}

Substituting these quadratic forms into \eqref{eq:alpha_critical} yields 
\begin{equation*}
\alpha_{\mathrm{critical}} = \frac{\sigma^2\,\mathbb{E}[X_+^2]}{\sigma^2\, \mathbb{E}[X_+^2] + \varepsilon(1-\varepsilon)a^2\,\mathbb{E}[X_+^4]}.
\label{eq:two_regime_alpha_critt}
\end{equation*}
For \(X\sim \mathrm{Unif}[-1,1]\), one has $\mathbb E[X_+^2]=\frac16$ and $\mathbb E[X_+^4]=\frac1{10}$, and therefore
\begin{equation*}
\alpha_{\mathrm{critical}}
=
\frac{5\sigma^2}{5\sigma^2+3\varepsilon(1-\varepsilon)a^2}.
\label{eq:two_regime_alpha_crit_uniform}
\end{equation*}

\begin{figure}[htb]
    \centering
    \includegraphics[width=0.8\linewidth]{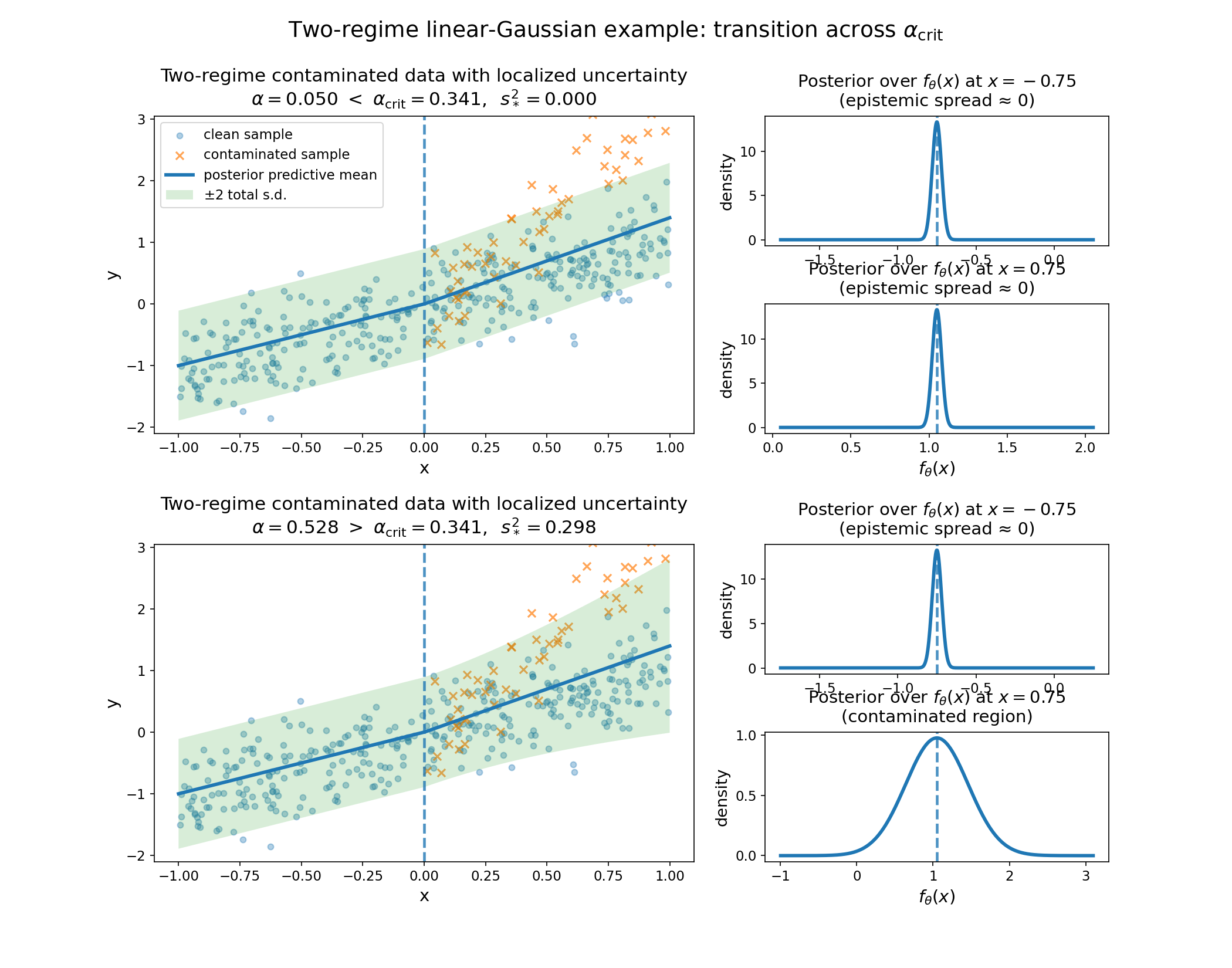}
    \caption{Behaviour of the model in Example \ref{example:stability} for $\alpha$ above and below the critical threshold.
    (Left) Clean and contaminated samples. (Right) Posterior predictive distributions when $x \geq 0$ and $x < 0$. }
    \label{fig:two_regime_example}
\end{figure}

The posterior predictive mean and variance under \(Q=\mathcal N(m^\star,\Sigma)\) are
\[
\mathbb E_Q[Y\mid X=x]
=
\phi(x)^\top m^\star
=
\beta x+\varepsilon a\,x_+,
\]
and
\[
\mbox{Var}_Q(Y\mid X=x)
=
\underbrace{\sigma^2}_{\text{aleatoric}}
+
\underbrace{\phi(x)^\top \Sigma\, \phi(x)}_{\text{epistemic}}.
\]
Restricting the posterior covariance to the contamination-specific subspace, $\Sigma=\mathrm{diag}(0,s^2)$, this becomes
\begin{equation*}
\mbox{Var}_Q(Y\mid X=x)=\sigma^2+s^2x_+^2,
\qquad
\mbox{Var}_{\theta\sim Q}(\mathbb E[Y\mid X=x,\theta])=s^2x_+^2.
\label{eq:two_regime_epistemic}
\end{equation*}
Thus the epistemic component vanishes for \(x<0\) and grows only on the contaminated region \(x\ge 0\).

We draw two conclusions from this simple example.  First, the eigenstructure of \(M_\alpha=V-\alpha J\) identifies the posterior spread directions that become locally favorable as \(\alpha\) increases. Second, when we align posterior spread with the contamination-specific feature \(x_+\), the resulting posterior spread produces {predictive} epistemic uncertainty, concentrated precisely on the anomalous region of the input space rather than inflated uniformly across all inputs. See Figure \ref{fig:two_regime_example} for an illustration of posterior uncertainties in each regime.
\end{example}

\subsection{Responsibility-weighted influence and robustness to contamination}\label{sec:robustness_influence}

The \(\alpha\)-R\'enyi variational objective also induces a form of responsibility-weighted
robustness. This should not be interpreted as unconditional robustness
of a single model in the sense of \cite{Huber.Wiley.ea1981Robuststatistics}. Rather, the robustness mechanism depends on the ensemble itself, where
an anomalous or poisoned observation can be routed away from particles with
which it is incompatible.

Let \(0<\alpha<1\), and consider a finite particle approximation  $Q_M = \frac{1}{M}\sum_{i=1}^M \delta_{\theta_i}$.  Then the gradient of $\ell_{\alpha}(Q_M; x, y)$ with respect to $\theta_i$ is given by
$$
\nabla_{\theta_i}\ell_{\alpha}(Q_M; x, y) = -\frac{p_{\theta_i}(y \mid x)^{\alpha}}{\frac{1}{M}\sum_{j} p_{\theta_j}(y \mid x)^\alpha } \nabla_{\theta_i} \log p_{\theta_i}(y \mid x) = w^{(\alpha)}(\theta_i; x,y, Q)\nabla_{\theta_i} L_{\theta_i}(x,y).
$$

We see that each gradient is modulated by the $\alpha$-responsibility weight $w^{(\alpha)}$.  Thus, the influence of an observation on particle \(i\) is its ordinary
log-likelihood score multiplied by an observation-specific responsibility.  This responsibility admits a simple shielding bound. Let
\[
L_{\min}(x,y):=\min_{1\le j\le M}L_{\theta_j}(x,y).
\]
Then
\[
w^{(\alpha)}(\theta_i; x,y, Q_M)
=
\frac{e^{-\alpha L_{\theta_i}(x,y)}}{\sum_j e^{-\alpha L_{\theta_j}(x,y)}}
\le
\exp\big(-\alpha[L_{\theta_i}(x,y)-L_{\min}(x,y)]\big).
\]
Consequently,
\begin{equation}
\|\nabla_{\theta_i}\ell_\alpha(Q^M;x,y)\|
\le
\exp\big(-\alpha[L_{\theta_i}(x,y)-L_{\min}(x,y)]\big)
\,
\|\nabla_{\theta_i}L_{\theta_i}(x,y)\|.
\label{eq:responsibility_shielding_bound}
\end{equation}
Equation~\eqref{eq:responsibility_shielding_bound} shows that the influence of
\(z\) on particle \(i\) is exponentially attenuated in the loss gap between
particle \(i\) and the best particle for that observation.

This provides a concrete mechanism for robustness to contamination or data poisoning. If a poisoned example \((x,y)\) is highly
incompatible with a clean particle \(i\), but is better explained by another
particle \(j\), then \(L_{\theta_i}(x,y)-L_{\theta_j}(x,y)\) is large, and the poisoned example sends
exponentially little gradient to the clean particle. Instead, its gradient is
routed toward the part of the ensemble that can explain it. In this sense,
positive \(\alpha\) can quarantine inconsistent or poisoned examples into a
small subset of particles, rather than forcing every particle to absorb the same
corrupted update.

This robustness mechanism is very diferent from ordinary robust losses arising from density-power divergences, e.g. \cite[Eq. (2.7)]{robustness_power_divergence_basu_98}.  The redescending influence of the $\alpha$-R\'enyi variational objective strongly depends on the size and diversity of $Q$. Robustness arises only when \(Q\) retains enough diversity
for an anomalous observation to be relatively incompatible with some particles
and relatively compatible with others.  Crucially, if all particles assign similar low likelihood
to the same outlier, then \(L_{\theta_i}(x,y)-L_{\min}(x,y)\) is small for many particles and
the normalised responsibilities need not suppress the gradient, so that the
mechanism is relative rather than absolute.

This suggests two practical diagnostics for poisoning or heterogeneous
contamination. First, the effective sample size of the responsibilities,
\[
\operatorname{ESS}(z)
=
\left(\sum_{i=1}^M w^{(\alpha)}(\theta_i; x,y, Q_M)^2\right)^{-1},
\]
measures how many particles are absorbing a given observation. Poisoned or
conflicting examples should often have low ESS, indicating concentrated routing.
Second, examples for which all particles have high loss but no clear
responsibility concentration indicate out-of-support anomalies; these are not
automatically handled by the ensemble and may require abstention, filtering, or
additional model support.

Ensemble and aggregation approaches to mitigating the effects of poisoning have been studied in previous works, \cite{levinedeep, jia2021intrinsic, wang2022improved, pmlr-v202-rezaei23a}.   Many of these strategies rely on partitioning the data, and training an ensemble of models across the partition, aggregating predictions at inference time.   Thus, they protect predictions by limiting the number of base models that any poisoned example can affect.   While related, our mechanism is different: all particles are trained jointly on the same data, and poisoning robustness arises through responsibilities that route gradients away from
particles for which an example is incompatible.

\section{Finite-particle approximations}
\label{sec:finite_particle_training}

The preceding sections define the \(\alpha\)-R\'enyi objective as a variational problem over probability measures \(Q\in\mathcal P(\Theta)\). In practice, we do not directly optimise over arbitrary measures. Instead, we approximate \(Q\) using a finite ensemble of \(M\) particles, $Q^M=\frac1M\sum_{i=1}^M \delta_{\theta_i}$,
where each \(\theta_i\) parametrises the model state. This turns the variational problem into a finite-dimensional optimisation problem over the particle parameters \(\theta_1,\ldots,\theta_M\), which can be solved using mini-batch AdamW.
However, directly substituting the empirical measure $Q^M$ into the objective $\mathcal{F}_\alpha$ is ill-posed:
the $\operatorname{KL}$ term \(\operatorname{KL}(Q^M\|\pi_0)\) becomes singular for any continuous prior density $\pi_0$. We therefore consider two practical alternatives: a smoothed empirical-density approximation to the $\operatorname{KL}$, and a simpler prior-potential surrogate which drops the entropy term. The full algorithm is provided in Alg. \ref{alg:alpha_renyi_lora_sft}.

\subsection{Finite-particle supervised objective}\label{sec:sft_subsec}

For a supervised fine-tuning example \((x,y)\), where \(y=(y_1,\ldots,y_T)\) is the target response, particle \(i\) assigns the autoregressive sequence log-likelihood $s_i(x,y) := \log p_{\theta_i}(y\mid x)$.  The \(\alpha\)-R\'enyi finite-particle loss is
\begin{equation}
\ell_\alpha^{(M)}(\theta_{1:M};x,y)
=
-\frac1\alpha
\log\left(
\frac1M\sum_{i=1}^M \exp(\alpha s_i(x,y))
\right),
\qquad \alpha>0.
\label{eq:finite_particle_sft_loss}
\end{equation}
In the limit \(\alpha\to 0\), this becomes the average negative log-likelihood across particles
\begin{equation*}
\ell_0^{(M)}(\theta_{1:M};x,y)
=
-\frac1M\sum_{i=1}^M s_i(x,y).
\label{eq:finite_particle_sft_loss_alpha0}
\end{equation*}

For a minibatch \(\mathcal B=\{(x_b,y_b)\}_{b=1}^B\), the empirical data loss is
\begin{equation}
\widehat{\mathcal L}_{\alpha}^{\mathrm{SFT}}(\theta_{1:M})
=
\frac{N}{B}\sum_{b=1}^B
\ell_\alpha^{(M)}(\theta_{1:M};x_b,y_b),
\label{eq:minibatch_sft_loss}
\end{equation}
where $N$ is the total size of the training set.
The corresponding responsibilities are
\begin{equation}
w_{i,b}^{(\alpha)}
=
\frac{\exp(\alpha s_i(x_b,y_b))}
{\sum_{j=1}^M \exp(\alpha s_j(x_b,y_b))}
=
\frac{p_{\theta_i}(y_b\mid x_b)^\alpha}
{\sum_{j=1}^M p_{\theta_j}(y_b\mid x_b)^\alpha}.
\label{eq:sft_responsibilities_finite}
\end{equation}
Differentiating \eqref{eq:minibatch_sft_loss} gives
\begin{equation}
\nabla_{\theta_i}\widehat{\mathcal L}_{\alpha}^{\mathrm{SFT}}
=
-\frac{N}{B}\sum_{b=1}^B
w_{i,b}^{(\alpha)}
\nabla_{\theta_i}\log p_{\theta_i}(y_b\mid x_b).
\label{eq:sft_particle_gradient}
\end{equation}
Thus \(\alpha\) determines how examples are routed across particles. For \(\alpha\to 0\), all particles receive equal responsibility. For \(\alpha>0\), examples contribute more strongly to particles that already assign them higher likelihood.

In practice, \eqref{eq:finite_particle_sft_loss} is implemented via a numerically stable log-sum-exp
\begin{equation*}
\ell_\alpha^{(M)}(\theta_{1:M};x,y)
=
-\frac1\alpha
\left[
\operatorname{logsumexp}_{i=1}^M(\alpha s_i(x,y))
-\log M
\right].
\label{eq:stable_sft_loss}
\end{equation*}
Turning to the prior regularisation term $\lambda \operatorname{KL}(Q\|\pi_0)$, for a density \(q=dQ/d\theta\) and prior density \(\pi_0(\theta)\), this can be decomposed as
\begin{equation*}
\operatorname{KL}(Q\|\pi_0)
=
\int q(\theta)\log q(\theta)\,d\theta
-
\int q(\theta)\log \pi_0(\theta)\,d\theta.
\label{eq:kl_decomposition}
\end{equation*}
The first term is the negative differential entropy of \(Q\), while the second is the prior-potential term. For the empirical particle measure $Q^M$
the entropy term is singular and \(\operatorname{KL}(Q^M\|\pi_0)=\infty\) whenever \(\pi_0\) is absolutely continuous. Therefore, an implementable finite-particle objective must approximate or modify the $\operatorname{KL}$ term.
We consider two practical choices.

\paragraph{Option A: smoothed empirical KL.}
We can retain an approximation to the full KL divergence by replacing the discrete empirical measure $Q^M$ with a continuous kernel density estimate
\[
q_M^\varepsilon(\theta)
=
\frac1M\sum_{i=1}^M K_\varepsilon(\theta-\theta_i),
\]
where \(K_\varepsilon\) is a smooth, positive kernel of bandwidth \(\varepsilon>0\). The smoothed KL regulariser is then
\begin{equation}
\mathcal R_{\mathrm{KDE}}(\theta_{1:M})
=
\lambda \int q_M^\varepsilon(\theta)
\log\frac{q_M^\varepsilon(\theta)}{\pi_0(\theta)}\,d\theta.
\label{eq:kde_kl_regularizer}
\end{equation}
This term preserves both parts of the KL: it encourages particles to remain in regions of high prior density, while the entropy component prevents the ensemble from collapsing to a single point.

In practice, the integral in \eqref{eq:kde_kl_regularizer} is generally intractable in high dimensions. A common approximation is to evaluate the smoothed density at the particle locations, yielding
\begin{equation*}
\widehat{\mathcal R}_{\mathrm{KDE}}(\theta_{1:M})
=
\frac{\lambda}{M}\sum_{i=1}^M
\left[
\log q_M^\varepsilon(\theta_i)
-
\log \pi_0(\theta_i)
\right],
\label{eq:kde_particle_regularizerr}
\end{equation*}
where $q_M^\varepsilon(\theta_i)
=
\frac1M\sum_{j=1}^M K_\varepsilon(\theta_i-\theta_j)$.
The gradient of the first term induces a repulsive interaction between nearby particles, while the second term pulls particles toward regions of high prior density. 
Although this approach successfully preserves both components of the exact KL-regularised variational problem, it introduces a bandwidth hyperparameter $\varepsilon$ that is difficult to tune in high-dimensional parameter spaces.

\paragraph{Option B: prior-potential surrogate}

This approach drops the entropy term entirely, retaining only the prior-potential component
\begin{equation*}
\mathcal R_{\mathrm{prior}}(\theta_{1:M})
=
-\frac{\lambda}{M}\sum_{i=1}^M \log \pi_0(\theta_i).
\label{eq:prior_potential_regularizer}
\end{equation*}
For a Gaussian prior \(\pi_0=\mathcal N(0,\tau^2 I)\), this reduces to
\begin{equation*}
\mathcal R_{\mathrm{prior}}(\theta_{1:M})
=
\frac{\lambda}{2M\tau^2}
\sum_{i=1}^M \|\theta_i\|^2
+
\mathrm{const.}
\label{eq:gaussian_prior_regularizer}
\end{equation*}
This approximation is simple and can be viewed as an analogue of standard weight decay regularisation in neural networks.   However, by dropping the entropy term, there is no explicit protection against particle collapse.   Ensemble diversity relies entirely on initialisation, and to a lesser extent stochasticity during the optimisation process.   Careful initialisation is therefore critical to ensure the ensemble maintains its spread during training.

\paragraph{Final objective}
The practical supervised objective is
\begin{equation*}
\mathcal J_{\alpha}^{\mathrm{SFT}}(\theta_{1:M})
=
\widehat{\mathcal L}_{\alpha}^{\mathrm{SFT}}(\theta_{1:M})
+
\mathcal R(\theta_{1:M}),
\label{eq:direct_final_sft_objectivee}
\end{equation*}
where \(\mathcal R\) is either \(\mathcal R_{\mathrm{prior}}\) or \(\widehat{\mathcal R}_{\mathrm{KDE}}\). 

\subsection{Gradient-flow interpretation}
\label{subsec:gradient_flow_interpretation}

The finite-particle objective above can be viewed as a scalable approximation to
a measure-valued gradient flow. Although this formalism is not needed for
implementation, it does shed light on how responsibility-weighted updates arise
naturally.

Consider the $\alpha$-R\'enyi data-fit term 
\[\mathcal{D}_\alpha(Q) := \sum_{i=1}^N \ell_{\alpha}(Q, x_i, y_i) = -\frac{1}{\alpha}\sum_{i=1}^N \log\int_{\Theta} p_{\theta}(x_i,y_i)^{\alpha}Q(d\theta).\]  
This has first variation 
\[
\frac{\delta \mathcal D_\alpha}{\delta Q}(\theta)
=
-\frac1\alpha
\sum_{i=1}^N
w^{(\alpha)}(\theta;x_i,y_i,Q).
\]

Assuming that \(Q_t\) admits a density \(\rho_t\) and the prior has density \(\pi_0\), the
formal Wasserstein gradient flow \citep{ollivier2014optimal} of the full $\alpha$-R\'enyi variational objective
\[
\mathcal F_\alpha(Q)
=
\mathcal D_\alpha(Q)
+
\lambda \operatorname{KL}(Q\|\pi_0)
\]
is
\[
\partial_t \rho_t
=
\nabla_\theta\cdot
\left(
\rho_t
\nabla_\theta \Psi_t
\right),
\qquad
\Psi_t(\theta)
:=
-\frac1\alpha
\sum_{i=1}^N
w^{(\alpha)}(\theta;x_i,y_i,\rho_t)
+
\lambda \log\frac{\rho_t(\theta)}{\pi_0(\theta)}.
\]
Equivalently, expanding the KL contribution gives the nonlinear
Fokker-Planck equation
\[
\partial_t \rho_t
=
-\nabla_\theta\cdot
\left[
\rho_t
\left(
\sum_{i=1}^N
w^{(\alpha)}(\theta;z_i,\rho_t)
\nabla_\theta \log p_\theta(z_i)
+
\lambda \nabla_\theta\log \pi_0(\theta)
\right)
\right]
+
\lambda \Delta \rho_t.
\]
The first term is the responsibility-weighted likelihood drift, the second
pulls mass toward high-prior-density regions, and the diffusion term is the
entropy component of the KL. Along sufficiently regular solutions,
\begin{equation}
\label{eq:lyapunov}
\frac{d}{dt}\mathcal F_\alpha(\rho_t)
=
-
\int
\|\nabla_\theta \Psi_t(\theta)\|^2
\rho_t(\theta)\,d\theta
\le 0.
\end{equation}
Thus, this flow decreases the same variational objective.  Despite having established convexity in total variation of the functional $\mathcal{F}_{\alpha}$ in Lemma \ref{lem:convexity_alpha_objective}, this is not sufficient to establish a qualitative rate of convergence for $\rho_t$ through \eqref{eq:lyapunov}.  This would require establishing displacement / geodesic convexity with respect to the Wasserstein geometry, e.g. by following the programme of \cite{carrillo2006contractions}.  While this strategy has been employed to obtain quantitative rates of convergence for other particle based variational inference methods, e.g. \cite{duncan2023geometry}, we do not expect the strategy to be applicable in the setting of this paper.  Thus stationary points of the Wasserstein gradient flow of $\mathcal{F}_{\alpha}$ need not be minimisers of the objective.   Nonetheless, it is possible that Wasserstein stationary points of $\mathcal{F}_{\alpha}$ would still inherit  useful properties, in the spirit of \cite{chen2025stationary}, but we defer this analysis for future work.

To approximate the gradient flow we plug in the empirical approximation to a general probability distribution $Q$,
\[
Q^M_t=\frac1M\sum_{i=1}^M \delta_{\theta_i(t)},
\]
and optimise the finite-particle objective. For the prior-potential surrogate,
the corresponding deterministic particle flow is, up to the normalisation
conventions in the empirical loss,
\[
\dot\theta_i
=
\sum_{b=1}^B
w_{i,b}^{(\alpha)}
\nabla_{\theta_i}\log p_{\theta_i}(z_b)
-
\nabla_{\theta_i}\mathcal R(\theta_{1:M}),
\qquad i=1,\ldots,M.
\]
This is precisely the negative Euclidean gradient flow of
\(\mathcal J_\alpha^{\mathrm{SFT}}\) over a finite particle ensemble.   In this paper, we do not adopt these dyanamics, as it does not scale well to large LLM post-training applications.  Instead of an explicit Euler discretisation of
this particle flow, we use AdamW which is highly effective for training large-scale neural network models. Heuristically, AdamW can be viewed understood as a practical
stochastic, adaptive, preconditioned discretisation of the finite-particle
energy \(\mathcal J_\alpha^{\mathrm{SFT}}\), rather than a scheme which is targeting a different objective.   

We believe that one could derive gradient flows with respect to a different preconditioned metric, which would yield AdamW-like numerical schemes for optimising \(\mathcal J_\alpha^{\mathrm{SFT}}\).   We leave this for future work.

\section{Experiments: \texorpdfstring{$\alpha$}{alpha}-R\'enyi ensembles for uncertainty quantification in LLMs}
\label{sec:llm_ensembles}

We now instantiate the general \(\alpha\)-R\'enyi framework in the setting of LLM post-training. Our goal is to replace the usual single adapted model with a finite ensemble of interacting low-rank adaptations applied to a shared, frozen base model. In this setting, the particles are trainable LoRA modules attached to the same underlying transformer. 

Let \(f_{W_0}\) denote a pretrained autoregressive language model with frozen parameters \(W_0\). We introduce \(M\) trainable low-rank adaptations $\theta_1,\dots,\theta_M \in \Theta$, where each \(\theta_i\) parametrises a collection of LoRA updates applied to a chosen subset of linear maps in the transformer. For a given particle \(\theta_i\), the effective model is $W(\theta_i)=W_0+\Delta W(\theta_i)$, and the induced conditional distribution over output sequences is denoted by $p_{\theta_i}(y\mid x)
=
p_{W_0+\Delta W(\theta_i)}(y\mid x)$. See Figure \ref{fig:lora_alpha_renyi_ensemble} for an illustration of the framework.  The full algorithm can be found in App. \ref{app:algorithm}, together with a code snippet. Further experimental results are provided in App. \ref{app:experimental_details}.

\subsection{\texorpdfstring{$\alpha$}{alpha}-R\'enyi ensemble training for supervised fine-tuning}
\label{subsec:sft_variant}

Following Section \ref{sec:sft_subsec}, we consider the total finite-particle objective ${\mathcal J}_{\alpha}^{\mathrm{SFT}} = \widehat{\mathcal L}_{\alpha}^{\mathrm{SFT}} + \frac{1}{N}\mathcal R_{\mathrm{prior}}$. For $\alpha>0$, this objective promotes specialisation across the LoRA ensemble.   
\paragraph{Experimental setup} 
We evaluate the framework using representative base models from different families, in particular, Phi-3-mini-4k-instruct, Qwen2-1.5B-Instruct and Nemotron-3-8B-base-4k. 
For each model, we instantiate an ensemble of $M=8$ trainable LoRA particles attached to the shared, frozen transformer backbone. 
We use the MMLU benchmark \cite{hendrycks2021measuring_mmlu} (split into training and test) as our primary dataset due to its diversity as it contains questions from a wide range of subjects. See App. \ref{app:experimental_details} for more details.

\paragraph{Results} Figure \ref{fig:heatmap_comparison} illustrates the emergent specialisation of the $\alpha$-Rényi ensemble on the diverse MMLU benchmark across the three base models. Under the classical variational objective ($\alpha=0$), the ensemble demonstrates uniform behaviour across the $M=8$ LoRA particles, yielding homogeneous performance that lacks differentiated feature learning. In contrast, setting the interpolation parameter to $\alpha=0.8$ induces distinct, localised specialisation among the adapters. 
By filtering the evaluation to isolate only the subset of queries that the frozen base model answers incorrectly, the third column reveals an interesting underlying mechanism. The induced specialisation is primarily driven by improvements on these incorrect examples, confirming that the $\alpha>0$ objective resolves base-model deficiencies by distributing challenging, heterogeneous tasks across complementary specialists.

 \begin{figure}[t]
    \centering
    \begin{subfigure}[c]{1\textwidth}
        \centering
        \includegraphics[width=\textwidth]{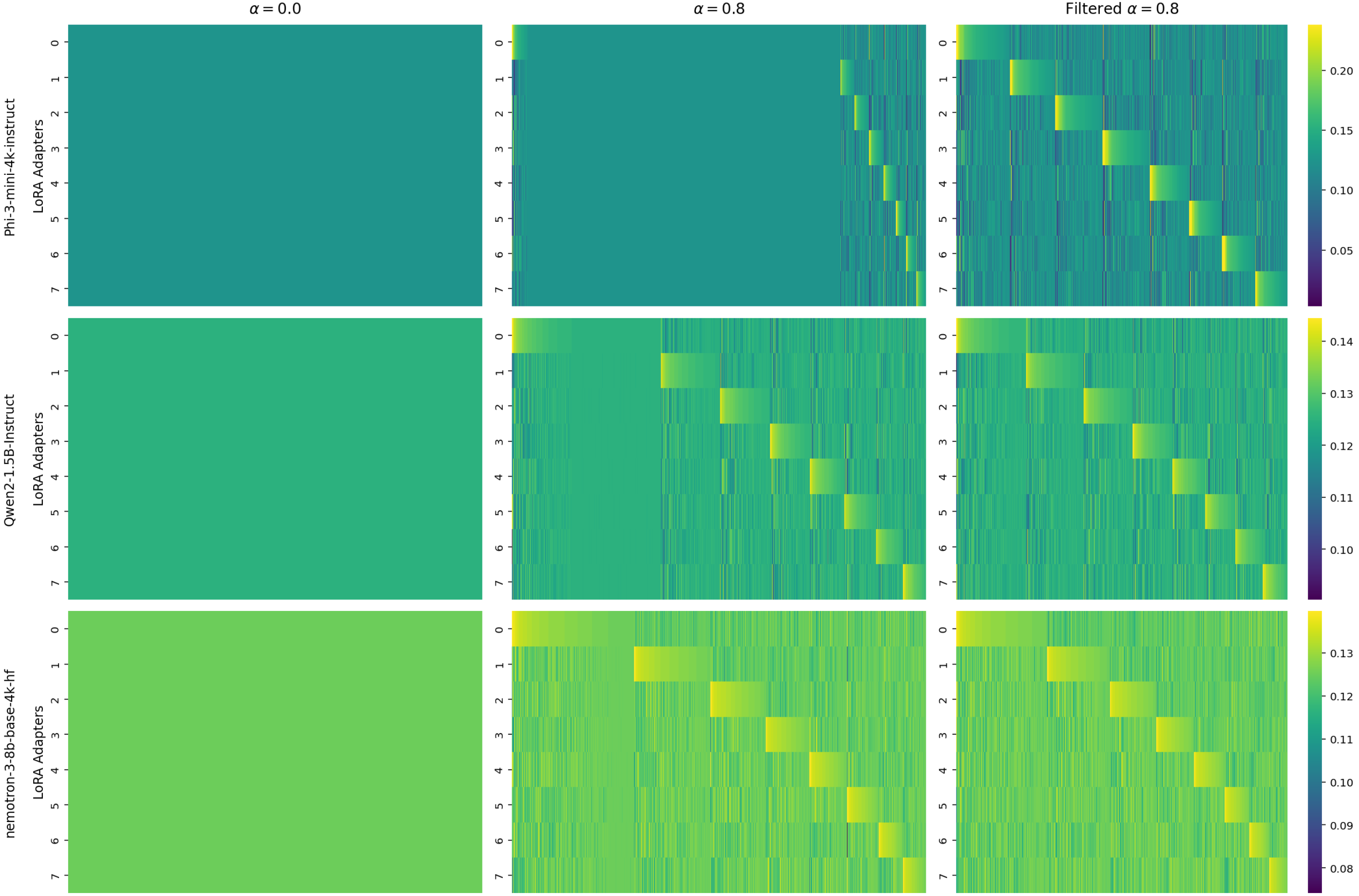}
        \label{fig:alpha_zero}
    \end{subfigure}
    \hfill 
    \caption{Emergent specialisation on the MMLU benchmark. Model performances (rows) are compared across $\alpha=0$ and $\alpha=0.8$ (columns). By filtering out questions the base model already answered correctly, the final column reveals that specialisation is primarily driven by improvements on incorrect examples.}
    \label{fig:heatmap_comparison}
\end{figure}

\subsection{\texorpdfstring{$\alpha$}{alpha}-R\'enyi ensemble training for direct preference optimisation}
\label{subsec:dpo_variant}

While the supervised fine-tuning objective is natural when a single target response is available, alignment-oriented settings often rely on pairwise preference data of the form $(x, y^+, y^-)$, where $y^+$ is preferred over $y^-$. We can extend the \(\alpha\)-R\'enyi framework to this regime by formulating the objective directly at the level of preference likelihoods rather than token-level imitation.

Following the Direct Preference Optimization (DPO) framework, each particle \(\theta_i\) induces its own Bradley-Terry preference likelihood $r_i(x,y^+,y^-)$ relative to a frozen reference model. We evaluate the ensemble through an \(\alpha\)-aggregated preference loss
\begin{equation}
\ell_\alpha^{\mathrm{pref}}(Q^M; x,y^+,y^-)
=
-\frac{1}{\alpha}
\log\left(
\frac1M\sum_{i=1}^M r_i(x,y^+,y^-)^\alpha
\right).
\label{eq:alpha_preference_loss}
\end{equation}

As in the supervised setting, this objective induces responsibilities across particles, routing preference pairs toward the particles that currently explain them well. Rather than forcing a single model to satisfy all pairwise preference constraints simultaneously, which can lead to brittle compromises, different preference pairs may be explained by different particles. This allows the ensemble to represent conflicting alignment pressures through posterior diversity and predictive cooperation.  Full derivations of the particle-wise preference margins, the minibatch objective, and the gradient updates are provided in App. \ref{app:dpo_variant}.

\paragraph{Experimental setup} We use the same language models as in Section \ref{subsec:sft_variant}. To evaluate the ensemble's behaviour under ambiguity, we use the OR benchmark \cite{cui2025or} after having fine-tuned on another DPO dataset to avoid leakage. The OR dataset serves as a robust testbed to determine whether epistemic uncertainty, manifested as disagreement across particles, effectively captures the model's confidence when deciding to refuse a prompt.

\paragraph{Results}
Table \ref{tab:dpo_results} demonstrates that the $\alpha$-Rényi ensemble successfully captures alignment ambiguity through inter-particle disagreement. When evaluated on the OR benchmark, the ensemble translates conflicting safety pressures into observable epistemic uncertainty. For standard, unambiguous prompts (\texttt{OR-Bench-80k}) and explicitly malicious requests (\texttt{OR-Bench-Toxic}), the particles largely agree on whether to comply or refuse, keeping the epistemic disagreement stable. However, on the \texttt{OR-Bench-Hard} split, which inherently stresses the blurry boundary between helpfulness and harmlessness, we observe a marked increase in the variance of refusal rates, $\operatorname{Var}(q_i)$ especially for $\alpha= 0.8$ compared to $\alpha=0$. 

\begin{table}[ht]
\centering
\caption{Mean refusal rates ($\bar{q}$) and epistemic disagreement ($\operatorname{Var}(q_i)$) on the OR benchmark splits across the $M=10$ LoRA particles. Format is $\bar{q}$ ($\operatorname{Var}(q_i)$).}
\label{tab:dpo_results}
\begin{tabular}{lcccc}
\toprule
\textbf{Model} & \textbf{Configuration} & \textbf{OR-Bench-80k} & \textbf{OR-Bench-Hard} & \textbf{OR-Bench-Toxic} \\
\midrule
\multirow{2}{*}{Phi-3-mini-4k} 
& $\alpha=0.0$       & 0.029 (0.006) & 0.232 (0.054) & 0.662 (0.050) \\
& $\alpha=0.8$ & 0.024 (0.005) & 0.297 (0.118) & 0.653 (0.052) \\
\midrule
\multirow{2}{*}{Qwen2-1.5B}    
& $\alpha=0.0$      & 0.004 ($3\times 10^{-4}$) & 0.142 (0.045) & 0.281 (0.020) \\
& $\alpha=0.8$ & 0.003 ($1\times 10^{-4}$) & 0.189 (0.096) & 0.256 (0.022) \\
\midrule
\multirow{2}{*}{Nemotron-3-8B} 
& $\alpha=0.0$       & 0.005 ($1\times10^{-5}$) & 0.446 (0.060) & 0.678 (0.047) \\
& $\alpha=0.8$ & 0.003 ($1\times10^{-5}$) & 0.485 (0.141) & 0.719 (0.046) \\
\bottomrule
\end{tabular}
\end{table}

\section{Discussion}

The $\alpha$-Rényi variational framework provides a principled method for learning distributions over post-training parameters, bridging the gap between classical variational Bayes and predictively oriented posterior learning. By treating a distribution of LoRA adapters as an interacting ensemble, the objective naturally induces a ``soft routing'' mechanism during training. Instead of forcing heterogeneous data and conflicting preferences into a single, compromised parameter vector, tuning the interpolation parameter $\alpha$ allows individual particles to specialise. 

Crucially, the degree of this specialisation does not need to be fixed in advance. The identification of a critical threshold, $\alpha_{\text{critical}}$, at which posterior spread becomes locally favourable under misspecification, points toward the dynamic selection of $\alpha$ as a powerful mechanism. 
Because this threshold can be numerically estimated (as noted in Section \ref{subsec:stability_misspecification}), $\alpha$ could be shifted from a static hyperparameter to an actively tuned variable. By adapting $\alpha$ during training in response to batch heterogeneity, the model could autonomously modulate its behaviour by enforcing tight generalisation on clean, unambiguous data, while dynamically increasing $\alpha$ to promote localised epistemic uncertainty when encountering highly conflicting or contaminated subsets. 

This capacity to represent unresolved uncertainty has profound implications for LLM alignment, particularly in paradigms like DPO. Alignment data inherently contains subjective ambiguity and competing pressures, such as the classic tension between remaining helpful and ensuring harmlessness. A single adapter forced to absorb these competing objectives often defaults to brittle compromises, such as over-refusing benign prompts to avoid catastrophic failures. By employing the $\alpha$-R\'enyi ensemble, conflicting alignment pressures are instead distributed across complementary specialists, representing ambiguity directly as posterior diversity.

\newpage
\section*{Acknowledgments}
PCE and GT gratefully acknowledges support from the EPSRC through the Centre for Doctoral Training in
Modern Statistics and Statistical Machine Learning (StatML), grant no. EP/S023151/1.  AD gratefully acknowledges support from AISI through the Alignment Project.  The authors are grateful for helpful discussions with Jiwon Park and Tom Coates.

\bibliographystyle{abbrv}
\bibliography{refs}

\newpage

\appendix

\section{Proofs}
\label{app:Proofs}
\begin{proof}[Proof of Lemma \ref{lem:alpha-interpolation}]
Let
\[
Z:=p_\theta(y\mid x),
\qquad
\theta\sim Q.
\]
By assumption, \(Z>0\) \(Q\)-a.s. The \(\alpha\)-loss may be written as
\[
\ell_\alpha(Q;x,y)
=
-\frac1\alpha \log \mathbb E_Q[Z^\alpha]
=
-\frac1\alpha \log \mathbb E_Q[\exp(\alpha\log Z)].
\]

\emph{Limit as \(\alpha\to0\).}
Define \(
\varphi(\alpha)
:=
\log \mathbb E_Q[\exp(\alpha\log Z)]\).
Since \(\varphi(0)=0\) and, under the stated integrability assumptions,
\(
\varphi'(0)
=
\mathbb E_Q[\log Z],
\)
we obtain
\[
\ell_\alpha(Q;x,y)
=
-\frac{\varphi(\alpha)}{\alpha}
\longrightarrow
-\varphi'(0)
=
-\mathbb E_Q[\log Z].
\]
This proves the first claim.

\emph{Endpoint \(\alpha=1\).}
At \(\alpha=1\),
\[
\ell_1(Q;x,y)
=
-\log \mathbb E_Q[Z]
=
-\log \int_\Theta p_\theta(y\mid x)\,Q(d\theta)
=
-\log p_Q(y\mid x).
\]

\emph{Interpolation inequality.}
For \(0<\alpha\le1\), the standard power-mean inequality for the positive random
variable \(Z\) gives
\[
\exp\{\mathbb E_Q[\log Z]\}
\le
\big(\mathbb E_Q[Z^\alpha]\big)^{1/\alpha}
\le
\mathbb E_Q[Z].
\]
Taking \(-\log\) throughout reverses the inequalities and yields
\[
-\log \mathbb E_Q[Z]
\le
-\frac1\alpha\log \mathbb E_Q[Z^\alpha]
\le
-\mathbb E_Q[\log Z],
\]
which is the desired interpolation inequality.

\emph{Monotonicity.}
Writing
\(\varphi(\alpha)
=
\log \mathbb E_Q[Z^\alpha]\),
then
\(
\ell_\alpha(Q;x,y)
=
-\frac{\varphi(\alpha)}{\alpha}
\).
For \(\alpha>0\), define the tilted distribution \(Q_\alpha\) by
\[
\frac{dQ_\alpha}{dQ}(\theta)
=
\frac{Z(\theta)^\alpha}{\mathbb E_Q[Z^\alpha]}.
\]
Differentiating \(\varphi\) gives
\[
\varphi'(\alpha)
=
\mathbb E_{Q_\alpha}[\log Z].
\]
Therefore
\[
\frac{d}{d\alpha}\ell_\alpha(Q;x,y)
=
\frac{\varphi(\alpha)-\alpha\varphi'(\alpha)}{\alpha^2}.
\]
On the other hand,
\[
\operatorname{KL}(Q_\alpha\|Q)
=
\mathbb E_{Q_\alpha}
\left[
\log \frac{dQ_\alpha}{dQ}
\right]
=
\mathbb E_{Q_\alpha}
\left[
\alpha\log Z-\varphi(\alpha)
\right]
=
\alpha\varphi'(\alpha)-\varphi(\alpha).
\]
Thus
\[
\frac{d}{d\alpha}\ell_\alpha(Q;x,y)
=
-\frac{1}{\alpha^2}
\operatorname{KL}(Q_\alpha\|Q)
\le 0.
\]
Hence \(\alpha\mapsto\ell_\alpha(Q;x,y)\) is non-increasing.

Finally, equality in the monotonicity derivative occurs if and only if
\(\operatorname{KL}(Q_\alpha\|Q)=0\), i.e. \(Q_\alpha=Q\). This holds if and
only if \(Z^\alpha\) is \(Q\)-a.s. constant, equivalently if
\(p_\theta(y\mid x)\) is \(Q\)-a.s. constant. This proves the strictness claim.
\end{proof}

\begin{proof}[Proof of Lemma \ref{lem:alpha_jensen_gap_variance_bound}]
Let
\[
Z:=p_\theta(y\mid x),
\qquad
\theta\sim Q,
\]
and define \(U:=Z^\alpha\). By assumption,
\[
0<U\le M_{x,y}^{\alpha}
\qquad Q\text{-a.s.}
\]
Moreover,
\[
\ell_0(Q;x,y)
=
-\mathbb E_Q[\log Z]
=
-\frac1\alpha \mathbb E_Q[\log U],
\]
and
\[
\ell_\alpha(Q;x,y)
=
-\frac1\alpha \log \mathbb E_Q[U].
\]
Therefore
\[
\ell_\alpha(Q;x,y)-\ell_0(Q;x,y)
=
\frac1\alpha
\left(
\mathbb E_Q[\log U]
-
\log \mathbb E_Q[U]
\right).
\]

The function \(f(u)=\log u\) is \(1/M_{x,y}^{2\alpha}\)-strongly concave on
\((0,M_{x,y}^{\alpha}]\), since
\[
f''(u)=-\frac1{u^2}
\le
-\frac1{M_{x,y}^{2\alpha}}.
\]
Thus, applying the strong-concavity Jensen gap bound to \(U\),
\[
\mathbb E_Q[\log U]
-
\log \mathbb E_Q[U]
\le
-
\frac{1}{2M_{x,y}^{2\alpha}}
\operatorname{Var}_Q(U).
\]
Substituting \(U=Z^\alpha=p_\theta(y\mid x)^\alpha\) gives
\[
\ell_\alpha(Q;x,y)-\ell_0(Q;x,y)
\le
-\frac{1}{2\alpha M_{x,y}^{2\alpha}}
\operatorname{Var}_{\theta\sim Q}
\!\left(
p_\theta(y\mid x)^\alpha
\right),
\]
thus proving the result.
\end{proof}

\begin{proof}[Proof of Lemma \ref{lem:convexity_alpha_objective}]
Let \(Q_0,Q_1\in\mathcal P(\Theta)\), let \(t\in[0,1]\), and define
\[
Q_t:=(1-t)Q_0+tQ_1.
\]
For a fixed data point \(z=(x,y)\), linearity of integration gives
\[
A_z(Q_t)
=
(1-t)A_z(Q_0)+tA_z(Q_1).
\]
Since \(u\mapsto -\alpha^{-1}\log u\) is convex on \((0,\infty)\), it follows that
\[
\ell_\alpha(Q_t;z)
\le
(1-t)\ell_\alpha(Q_0;z)
+
t\ell_\alpha(Q_1;z).
\]
Thus \(Q\mapsto \ell_\alpha(Q;z)\) is convex, and the finite-sample data term,
being a sum of convex functionals, is convex.

The KL term is convex in its first argument, so \(\mathcal F_{\alpha}\) is
convex. To obtain strong convexity, assume \(Q_0,Q_1\ll\pi_0\). Then
\[
(1-t)\operatorname{KL}(Q_0\|\pi_0)
+
t\operatorname{KL}(Q_1\|\pi_0)
-
\operatorname{KL}(Q_t\|\pi_0)
\]
equals
\[
(1-t)\operatorname{KL}(Q_0\|Q_t)
+
t\operatorname{KL}(Q_1\|Q_t).
\]
By Pinsker's inequality, using
\[
\operatorname{TV}(Q_0,Q_t)=t\,\operatorname{TV}(Q_0,Q_1),
\qquad
\operatorname{TV}(Q_1,Q_t)=(1-t)\operatorname{TV}(Q_0,Q_1),
\]
we get
\[
(1-t)\operatorname{KL}(Q_0\|Q_t)
+
t\operatorname{KL}(Q_1\|Q_t)
\ge
2t(1-t)\operatorname{TV}(Q_0,Q_1)^2.
\]
Therefore
\[
\operatorname{KL}(Q_t\|\pi_0)
\le
(1-t)\operatorname{KL}(Q_0\|\pi_0)
+
t\operatorname{KL}(Q_1\|\pi_0)
-
2t(1-t)\operatorname{TV}(Q_0,Q_1)^2.
\]
Combining this with convexity of the data term yields
\[
\mathcal F_{\alpha}(Q_t)
\le
(1-t)\mathcal F_{\alpha}(Q_0)
+
t\mathcal F_{\alpha}(Q_1)
-
2\lambda t(1-t)\operatorname{TV}(Q_0,Q_1)^2.
\]
Thus \(\mathcal F_{\alpha}\) is strongly convex in total variation when
\(\lambda>0\). In particular, it can have at most one minimiser.
\end{proof}

\begin{proof}[Proof of Proposition \ref{prop:self_consistency_alpha}]
By Lemma~\ref{lem:convexity_alpha_objective}, the functional
\(\mathcal F_{\alpha}\) is strongly convex on its effective domain when
\(\lambda>0\). Hence it has at most one minimiser. It remains to derive the
first-order condition.

Let
\[
A_i(q)
:=
\int_\Theta
p_\theta(y_i\mid x_i)^\alpha
q(\theta)\,\pi_0(d\theta).
\]
For \(Q(d\theta)=q(\theta)\pi_0(d\theta)\), the objective can be written as
\[
\mathcal F_{\alpha}(q)
=
-\frac1\alpha
\sum_{i=1}^N
\log A_i(q)
+
\lambda
\int_\Theta q(\theta)\log q(\theta)\,\pi_0(d\theta),
\]
subject to the constraint
\[
\int_\Theta q(\theta)\,\pi_0(d\theta)=1.
\]

Let \(h\) be any signed perturbation satisfying
\[
\int_\Theta h(\theta)\,\pi_0(d\theta)=0.
\]
For sufficiently small \(\varepsilon\), set \(q_\varepsilon=q+\varepsilon h\).
Differentiating at \(\varepsilon=0\) gives
\[
\left.
\frac{d}{d\varepsilon}
\mathcal F_{\alpha}(q_\varepsilon)
\right|_{\varepsilon=0}
=
\int_\Theta
\left[
\lambda(\log q(\theta)+1)
-
\frac1\alpha
\sum_{i=1}^N
\frac{p_\theta(y_i\mid x_i)^\alpha}{A_i(q)}
\right]
h(\theta)\,\pi_0(d\theta).
\]
At an interior minimiser \(q^\star_\alpha\), this derivative must vanish for
all such mass-preserving perturbations \(h\). Therefore the bracketed quantity
must be constant \(\pi_0\)-a.e.; that is, there exists a constant \(C\) such that
\[
\lambda(\log q^\star_\alpha(\theta)+1)
-
\frac1\alpha
\sum_{i=1}^N
\frac{p_\theta(y_i\mid x_i)^\alpha}{A_i(q^\star_\alpha)}
=
C.
\]
Rearranging,
\[
\log q^\star_\alpha(\theta)
=
C'
+
\frac{1}{\lambda\alpha}
\sum_{i=1}^N
\frac{p_\theta(y_i\mid x_i)^\alpha}{A_i(q^\star_\alpha)}.
\]
Exponentiating and absorbing \(e^{C'}\) into a normalising constant gives
\[
q^\star_\alpha(\theta)
=
\frac{1}{Z_\alpha}
\exp\!\left(
\frac{1}{\lambda\alpha}
\sum_{i=1}^N
\frac{p_\theta(y_i\mid x_i)^\alpha}{A_i(q^\star_\alpha)}
\right).
\]
Since
\[
A_i(q^\star_\alpha)
=
\int_\Theta
p_{\vartheta}(y_i\mid x_i)^\alpha
Q^\star_\alpha(d\vartheta),
\]
this is precisely the stated self-consistency equation.

Conversely, suppose \(q\) is a strictly positive density satisfying the
fixed-point equation. Then the preceding calculation shows that the first
variation of \(\mathcal F_{\alpha}\) at \(q\) vanishes in every
mass-preserving direction. Since \(\mathcal F_{\alpha}\) is convex, \(q\) is a
global minimiser. Since the functional is strongly convex when \(\lambda>0\),
this minimiser is unique.
\end{proof}

\begin{proof}[Proof of Proposition \ref{prop:well_specified_minimizers}]
    First, we show that if the model is well-specified, then $\delta_{\theta^\star}$ is a minimiser of the population risk $\mathcal{R}_\alpha(Q)$. Using Lemma \ref{lem:alpha-interpolation}, we have the following bound
    \begin{equation*}
        \mathcal{R}_\alpha(Q) = \mathbb{E}_{P^\star} [\ell_\alpha(Q; X, Y)] \geq \mathbb{E}_{P^\star} [\ell_1(Q; X, Y)]= \mathbb{E}_{P^*}[-\log p_Q(Y|X)] = \mathcal{R}_1(Q).
    \end{equation*}
    Using the non-negativity of the KL divergence, the cross-entropy between the true conditional distribution $p^*(y|x)$ and any other distribution $p_Q(y|x)$ is strictly minimised when the two distributions are identical. Therefore,
    $$\mathbb{E}_{Y \sim p^*(\cdot|X)} [-\log p_Q(Y|X)] \ge \mathbb{E}_{Y \sim p^*(\cdot|X)} [-\log p^*(Y|X)].$$
    Taking the expectation over $X$, this gives us a lower bound of the population risk $\mathcal{R}_\alpha(Q)$ for any $Q$
    $$\mathcal{R}_\alpha(Q) \ge \mathcal{R}_1(Q) \ge \mathbb{E}_{P^*}[-\log p^*(Y|X)].$$
    The lower bound $\mathbb{E}_{P^*}[-\log p^*(Y|X)]$ is attained when $Q = \delta_{\theta^*}$ centred at the true parameter. Therefore, $\delta_{\theta^*}$ is a global minimiser of the population risk for every $\alpha \in (0, 1]$.
    
Now we characterise the minimisers. Any minimiser $Q$ must achieve this lower bound, requiring both inequalities above to hold with equality. The second inequality is an equality if and only if $\mathcal{R}_1(Q) = \mathbb{E}_{P^\star}[-\log p^\star(Y|X)]$, which implies the KL divergence between $p^\star(\cdot|x)$ and $p_Q(\cdot|x)$ is zero for $P^\star$-a.e. $x$. Thus, for every $\alpha \in (0, 1]$, any minimiser must satisfy
$$p_Q(\cdot|x) = p^\star(\cdot|x) \quad \text{for } P^\star\text{-a.e. } x.$$
This establishes the claim for $\alpha=1$.

For $0 < \alpha < 1$, achieving the lower bound additionally requires the first inequality to hold with equality: $\mathcal{R}_\alpha(Q) = \mathcal{R}_1(Q)$. Because $\ell_\alpha(Q; x, y) \ge \ell_1(Q; x, y)$ pointwise, equality in expectation requires $\ell_\alpha(Q; X, Y) = \ell_1(Q; X, Y)$ almost surely. The bound in Lemma \ref{lem:alpha-interpolation} relies on Jensen's inequality applied to the strictly concave function $t \mapsto t^\alpha$. Therefore, equality holds if and only if the random variable is constant almost surely. 
Thus, $p_\theta(\cdot|x)$ must be constant for $Q$-a.e. $\theta$. In particular, this constant must be exactly $p^\star(\cdot|x)$. Therefore, for $Q$-a.e. $\theta$:
$$p_\theta(\cdot|x) = p^\star(\cdot|x) \quad \text{for } P^\star\text{-a.e. } x.$$
This means that $Q$ must be supported entirely on the exact-fit set $\Theta^\star$.

\end{proof}

\begin{proof}[Proof of Proposition \ref{prop:local_dirac_expansion}]
    Let the negative log-likelihood be $L_\theta(z) = -\log p_\theta(y|x)$. We want to expand the per-example $\alpha$-loss $l_\alpha(Q; z)$ for a distribution $Q$ with mean $\theta$ and covariance matrix $\Sigma \to 0$. The $\alpha$-loss can be rewritten as
    \begin{equation*}
    l_\alpha(Q; z) = -\frac{1}{\alpha}\log\mathbb{E}_{\theta'\sim Q}[\exp(-\alpha L_{\theta'}(z))].
    \end{equation*}
    Let $f(\theta') = \exp(-\alpha L_{\theta'}(z))$. Taking a second-order Taylor expansion around the mean $\theta$, we have
    \[
    f(\theta') = f(\theta) + \nabla f(\theta)^\top(\theta' - \theta) + \tfrac{1}{2}(\theta' - \theta)^\top\nabla^2 f(\theta)(\theta' - \theta) + o(||\theta' - \theta||^2).
    \]
    By taking expectations, the first term vanishes resulting in
    \begin{equation*}
    \mathbb{E}_Q[f(\theta')] = f(\theta) + \frac{1}{2}\operatorname{Tr}(\nabla^2 f(\theta)\Sigma) + o(||\Sigma||).
    \end{equation*}
    Expanding the Hessian $\nabla f(\theta)$, it follows
    \begin{equation*}
    \mathbb{E}_Q[f(\theta')] = f(\theta)\left(1 + \frac{1}{2}\operatorname{Tr}\left(\left[\alpha^2\nabla L_\theta(z)\nabla L_\theta(z)^\top - \alpha\nabla^2 L_\theta(z)\right]\Sigma\right)\right) + o(||\Sigma||).
    \end{equation*}
    This leads to 
    $$l_\alpha(Q; z) = -\frac{1}{\alpha}\log\left(f(\theta)\left(1 + \frac{1}{2}\operatorname{Tr}\left(\left[\alpha^2\nabla L_\theta(z)\nabla L_\theta(z)^\top - \alpha\nabla^2 L_\theta(z)\right]\Sigma\right)\right) + o(||\Sigma||)\right).$$
    Using a first-order Taylor approximation $\log(1 + x) \approx x$ for small $x$, together with the cyclic property of the trace, yields
    $$l_\alpha(Q; z) = L_\theta(z) + \frac{1}{2}\text{Tr}(\nabla^2 L_\theta(z)\Sigma) - \frac{\alpha}{2}\nabla\log p_\theta(z)^\top\Sigma\nabla\log p_\theta(z) + o(||\Sigma||).$$
    For the population risk $\mathcal{R}_\alpha(Q) = \mathbb{E}_{P^*}[l_\alpha(Q; Z)]$, we take the expectation over the data-generating distribution $P^*$ and use $V(\theta) = \mathbb{E}_{P^*}[\nabla^2(-\log p_\theta(Z))]$, $J(\theta) = \mathbb{E}_{P^*}[\nabla\log p_\theta(Z)\nabla\log p_\theta(Z)^\top]$, which provides 
    $$\mathcal{R}_\alpha(Q) = \mathbb{E}_{P^*}[-\log p_\theta(Z)] + \frac{1}{2}\text{Tr}((V(\theta) - \alpha J(\theta))\Sigma) + o(||\Sigma||),$$
    concluding the proof.
\end{proof}

\section{\texorpdfstring{$\alpha$}{alpha}-R\'enyi ensemble training for direct preference optimisation}
\label{app:dpo_variant}

This appendix provides the full derivation of the preference optimisation variant of the \(\alpha\)-R\'enyi ensemble objective introduced in Section \ref{subsec:dpo_variant}.

A standard probabilistic model for pairwise comparisons is the Bradley-Terry model. Given a latent utility function \(r(x,y)\), it assumes that the probability that response \(y^+\) is preferred to response \(y^-\) satisfies
\begin{equation}
\mathbb P(y^+ \succ y^- \mid x)
=
\frac{\exp(r(x,y^+))}{\exp(r(x,y^+))+\exp(r(x,y^-))}
=
\sigma\!\big(r(x,y^+)-r(x,y^-)\big),
\label{eq:bradley_terry}
\end{equation}
where \(\sigma(u)=1/(1+e^{-u})\) is the logistic sigmoid. The corresponding negative log-likelihood for one preference pair is
\[
-\log \sigma\!\big(r(x,y^+)-r(x,y^-)\big).
\]

In the DPO framework, the latent utility is identified, up to an additive constant, with a scaled log-density ratio between the policy and a fixed reference model
\begin{equation}
r_\theta(x,y)
=
\beta\Big(
\log p_\theta(y\mid x)-\log p_{\mathrm{ref}}(y\mid x)
\Big),
\label{eq:dpo_reward}
\end{equation}
where \(\beta>0\) is an inverse-temperature parameter and \(p_{\mathrm{ref}}\) is a frozen reference model. Substituting \eqref{eq:dpo_reward} into \eqref{eq:bradley_terry} yields the standard DPO likelihood
\begin{equation*}
\mathbb P_\theta(y^+ \succ y^- \mid x)
=
\sigma\!\left(
\beta\left[
\log \frac{p_\theta(y^+\mid x)}{p_\theta(y^-\mid x)}
-
\log \frac{p_{\mathrm{ref}}(y^+\mid x)}{p_{\mathrm{ref}}(y^-\mid x)}
\right]
\right).
\label{eq:dpo_likelihood}
\end{equation*}

In the LoRA ensemble setting, each particle \(\theta_i\) induces its own preference score. For a preference triple \((x,y^+,y^-)\), define the particle-wise preference margin
\begin{equation*}
\Delta_i(x,y^+,y^-)
:=
\log p_{\theta_i}(y^+\mid x)
-
\log p_{\theta_i}(y^-\mid x)
-
\log p_{\mathrm{ref}}(y^+\mid x)
+
\log p_{\mathrm{ref}}(y^-\mid x).
\label{eq:particle_preference_marginn}
\end{equation*}
Each sequence log-likelihood is the usual autoregressive teacher-forced sum over target tokens. The corresponding Bradley-Terry preference likelihood for particle \(i\) is
\begin{equation}
r_i(x,y^+,y^-)
:=
\sigma\!\big(\beta\,\Delta_i(x,y^+,y^-)\big).
\label{eq:particle_preference_likelihood}
\end{equation}

Substituting this into the per-example \(\alpha\)-R\'enyi preference loss (Eq. \ref{eq:alpha_preference_loss}) and expanding over a minibatch \(\mathcal B=\{(x_b,y_b^+,y_b^-)\}_{b=1}^B\), the finite-particle objective becomes
\begin{equation*}
\hat{\mathcal F}_{\alpha,\mathrm{pref}}^{(M)}(\theta_1,\dots,\theta_M)
=
\frac{N}{B}\sum_{b=1}^B
\ell_\alpha^{\mathrm{pref}}(Q^M;x_b,y_b^+,y_b^-)
+
\mathcal R_{\mathrm{prior}}(\theta_1,\dots,\theta_M).
\label{eq:minibatch_preference_objectivee}
\end{equation*}

The \(\alpha\)-objective induces responsibilities across particles. For each minibatch example \(b\), define
\begin{equation*}
w_{i,b}^{(\alpha)}
=
\frac{r_i(x_b,y_b^+,y_b^-)^\alpha}
{\sum_{j=1}^M r_j(x_b,y_b^+,y_b^-)^\alpha}.
\label{eq:preference_responsibilitiess}
\end{equation*}
The gradient of \(\hat{\mathcal F}_{\alpha,\mathrm{pref}}^{(M)}\) with respect to particle \(\theta_i\) takes the form
\begin{equation*}
g_i
=
-\frac{N}{B}\sum_{b=1}^B
w_{i,b}^{(\alpha)}
\nabla_{\theta_i}\log r_i(x_b,y_b^+,y_b^-)
+
\nabla_{\theta_i}\mathcal R_{\mathrm{prior}}.
\label{eq:preference_particle_gradientt}
\end{equation*}
Using \eqref{eq:particle_preference_likelihood}, we have
\[
\nabla_{\theta_i}\log r_i
=
\beta\big(1-r_i\big)\nabla_{\theta_i}\Delta_i,
\]
where
\begin{equation*}
\nabla_{\theta_i}\Delta_i
=
\nabla_{\theta_i}\log p_{\theta_i}(y_b^+\mid x_b)
-
\nabla_{\theta_i}\log p_{\theta_i}(y_b^-\mid x_b).
\label{eq:preference_margin_gradientt}
\end{equation*}
Therefore, the gradient step remains fully compatible with standard autoregressive LLM training.

\section{Implementation details for LoRA-based $\alpha$-R\'enyi ensembles}
\label{app:algorithm}

\begin{algorithm}[!h]
\small{
\caption{Direct AdamW training of an \(\alpha\)-R\'enyi LoRA ensemble}
\label{alg:alpha_renyi_lora_sft}
\begin{algorithmic}[1]
    \Require supervised dataset \(\mathcal D=\{(x_n,y_n)\}_{n=1}^N\), frozen base model \(W_0\), number of LoRA particles \(M\), parameter \(\alpha\in[0,1]\), prior regularisation weight \(\lambda\ge 0\), number of training steps \(T\), AdamW optimiser.
    \State Initialise LoRA particles \(\theta_1,\ldots,\theta_M\).
    \State Freeze base model parameters \(W_0\).
    \For{\(t=1,\ldots,T\)}
        \State Sample minibatch \(\mathcal B=\{(x_b,y_b)\}_{b=1}^B\subset\mathcal D\).
        \For{\(i=1,\ldots,M\)}
            \State Compute teacher-forced sequence log-likelihoods
            \[
            s_{i,b}
            :=
            \log p_{\theta_i}(y_b\mid x_b)
            =
            \sum_{\tau=1}^{T_b}
            \log p_{\theta_i}(y_{b,\tau}\mid x_b,y_{b,<\tau})
            \qquad b=1,\ldots,B.
            \]
        \EndFor
        \If{\(\alpha=0\)}
            \State Compute the average particle loss
            \[
            \mathcal L_{\mathrm{data}}
            =
            -\frac{1}{BM}
            \sum_{b=1}^B\sum_{i=1}^M s_{i,b}.
            \]
        \Else
            \State Compute the \(\alpha\)-R\'enyi data loss using log-sum-exp:
            \[
            \mathcal L_{\mathrm{data}}
            =
            -\frac{1}{B\alpha}
            \sum_{b=1}^B
            \left[
            \operatorname{logsumexp}_{i=1}^M(\alpha s_{i,b})
            -
            \log M
            \right].
            \]
        \EndIf
               \State Compute a finite-particle approximation to the prior/KL regulariser: \(\mathcal R\in\{\mathcal R_{\mathrm{prior}},\mathcal R_{\mathrm{KDE}}\}\).
        \Statex \quad \textbf{Option A: prior-potential surrogate}
        \[
        \mathcal R_{\mathrm{prior}}
        =
        -\frac{\lambda}{M}
        \sum_{i=1}^M \log q_0(\theta_i).
        \]
        \Statex \quad \textbf{Option B: smoothed empirical KL}
        \[
        q_M^\varepsilon(\theta)
        =
        \frac1M\sum_{j=1}^M K_\varepsilon(\theta-\theta_j),
        \qquad
        \mathcal R_{\mathrm{KDE}}
        =
        \frac{\lambda}{M}
        \sum_{i=1}^M
        \left[
        \log q_M^\varepsilon(\theta_i)
        -
        \log q_0(\theta_i)
        \right].
        \]
        
        \State Form the finite-particle objective
        \[
        \mathcal J_\alpha
        =
        \mathcal L_{\mathrm{data}}
        +
        \frac{1}{N}\mathcal R.
        \]
        \State Update \(\theta_1,\ldots,\theta_M\) by performing one AdamW step on \(\nabla_{\theta_{1:M}}\mathcal J_\alpha\).
    \EndFor
    \State \textbf{Output:} LoRA ensemble \(\{\theta_i\}_{i=1}^M\), posterior approximation \(Q^M=\frac1M\sum_{i=1}^M\delta_{\theta_i}\).
\end{algorithmic}}
\normalsize
\end{algorithm}

The full algorithm is provided in Algorithm \ref{alg:alpha_renyi_lora_sft}.

\subsection{Efficient computation with shared frozen backbones}

A central practical advantage of the LoRA ensemble setting is that all particles share the same frozen base model. This induces substantial computational structure.

First, the dominant parameter memory remains that of the single base model \(W_0\), only the low-rank increments \(\Delta W(\theta_i)\) are replicated across particles. If the LoRA rank \(r\) is small, the memory overhead of the ensemble scales roughly linearly in \(M r\), rather than in the full model dimension.

Second, the particle dimension can be partially vectorised. Given a minibatch \(\{(x_b,y_b)\}_{b=1}^B\), one may form an augmented batch indexed by both data point and particle, evaluate all \(M\) LoRA variants in parallel, and compute the matrix $ \big(\log p_{\theta_i}(y_b\mid x_b)\big)_{1\le i\le M,\;1\le b\le B}$.  From this matrix, the responsibilities  are obtained by a softmax across the particle dimension after multiplying by \(\alpha\). In practice, for autoregressive sequence losses, this is done by accumulating token-level log-probabilities over the target continuation.

Third, because all particles are small modifications of a common backbone, one can exploit implementation-level sharing. For example, all non-adapted layers are evaluated identically across particles, and only the adapted projections differ. When memory permits, the LoRA updates can be stacked and applied in parallel using batched low-rank matrix multiplications. When memory is more constrained, particles can be processed in chunks while still sharing the same frozen model weights.

\subsection{Code snippet}
To illustrate the simplicity of integrating our approach, we provide a code snippet below showing how to modify standard LoRA to support $\alpha$-Rényi ensemble training.

\section{Experimental details}\label{app:experimental_details}
We provide further details regarding our experimental setup. Furthermore, we include a code snippet that demonstrates the lightweight adaptation of our framework into standard LoRA layers.

\paragraph{Models} We evaluate our approach across three different base models: Phi-3-mini-4k-instruct, Qwen2-1.5B-Instruct, and Nemotron-3-8B-base-4k.

\subsection{Supervised fine-tuning experiment} 

\paragraph{Benchmark} 
We consider the MMLU benchmark \cite{hendrycks2021measuring_mmlu}, we split the data into training and test. Its high task diversity provides an ideal testbed for observing and validating the emergence of specialisation across the ensemble.

\paragraph{Evaluation}
We compute the test-set responsibilities, defined as
\begin{equation*}
w_{i}(x,y) = \frac{p_{\theta_i}(y\mid x)}{\sum_{j=1}^M p_{\theta_j}(y\mid x)}
\end{equation*}
and report them in Figure \ref{fig:heatmap_comparison} to illustrate the relative predictive strengths of each particle on a per-question basis.

\paragraph{Hyperparameters} 
All models are trained for 5 epochs. To determine the optimal configuration, we perform a hyperparameter sweep over learning rates $\eta \in \{1\times10^{-6}, 5\times10^{-6}, 1\times10^{-5}, 5\times10^{-5}, 1\times10^{-4}\}$ and LoRA ranks $r \in \{4, 8, 16, 32\}$. Based on this search, we set the learning rate to $5\times10^{-6}$ across all experiments. The optimal LoRA rank was found to be $r=4$ for Qwen2-1.5B and Nemotron-3-8B, and $r=8$ for Phi-3-mini-4k.

\includepdf[fitpaper=true, pages=-]{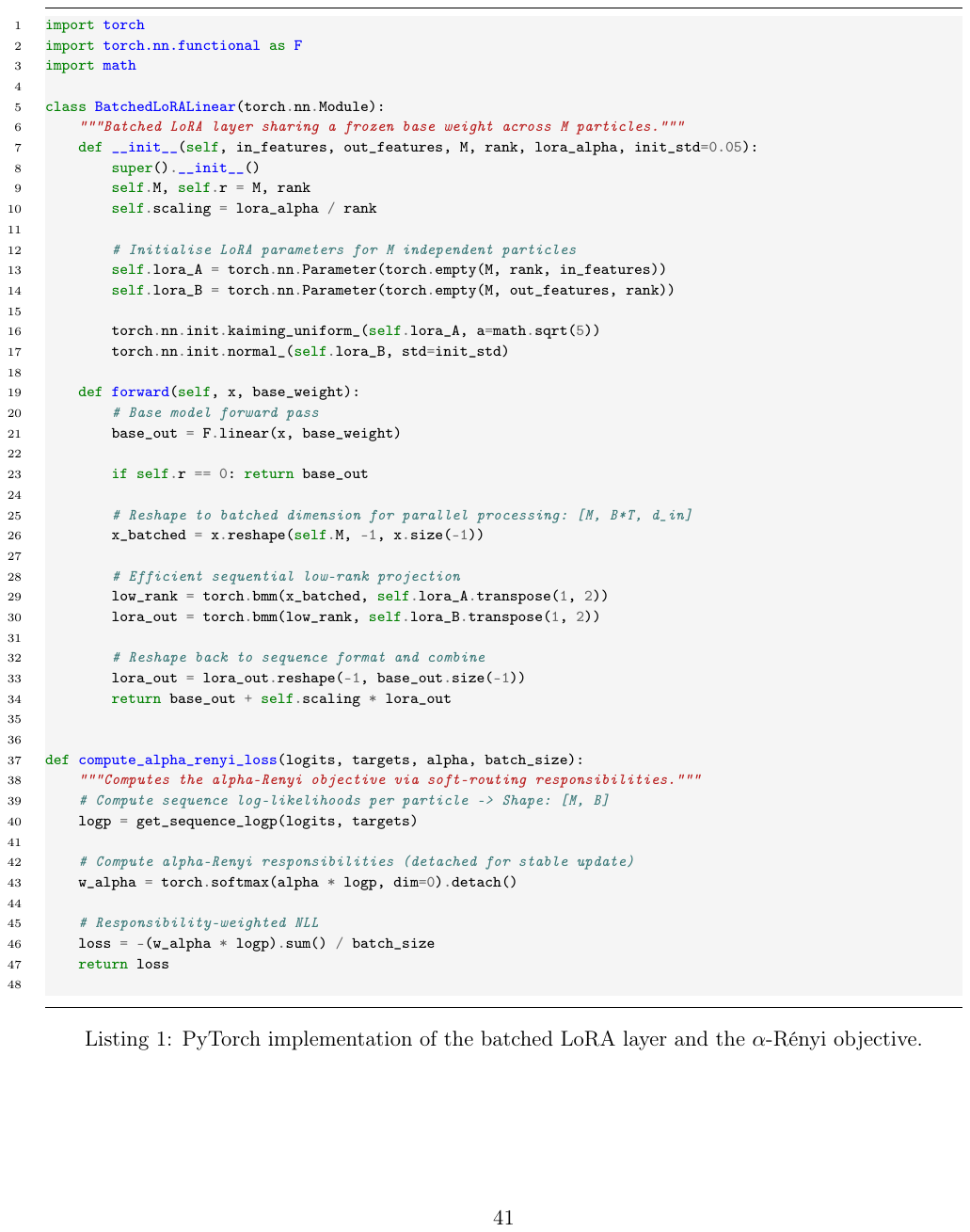}

\subsection{Direct preference optimisation experiment}

\paragraph{Training data}
Models are fine-tuned on the \texttt{trl-lib/ultrafeedback\_binarized} DPO dataset. To ensure high-quality preference signals, we filtered the dataset to include only pairs where the difference between the chosen and rejected scores strictly exceeded 1.

\paragraph{Benchmark} 
We use the OR benchmark \cite{cui2025or} to evaluate the ensemble's behaviour under ambiguity. 
We evaluated the models across three distinct splits: \texttt{OR-bench-80k, OR-bench-hard-1k} and \texttt{OR-bench-toxic}.

\paragraph{Evaluation} To compute particle-level refusal probabilities, we generated 12 responses per particle for each prompt.
We used an LLM-as-a-judge to decide whether the generated response is a refusal or not. We then report the mean and variance $\operatorname{Var}(q_i)$ of the refusal probabilities across the different particles.

\paragraph{Hyperparameters}
We performed a hyperparameter search similar to the one in the previous experiment. The ensemble of LoRA adapters was trained using the AdamW optimiser with a learning rate of $1\times 10^{-5}$ and a DPO inverse-temperature parameter $\beta = 0.1$.  In this experiment, we used an ensemble of 10 particles, with the interpolation parameter set to $\alpha = 0.8$ and a LoRA rank of 4 for all models.

\section{Related work}
\label{app:related_work}

Our work connects several strands of literature: Bayesian neural networks and variational inference, generalised Bayesian inference under misspecification, Rényi and entropic-risk objectives, predictively oriented posteriors, deep ensembles, parameter-efficient fine-tuning, and uncertainty-aware alignment. 

\paragraph{Bayesian neural networks and variational inference.}

Various works have sought to integrate the principles of Bayesian inference with deep learning models, effectively learning a conditional probability distribution over neural network parameters, permitting a principled handling of uncertainty and the incorporation of prior beliefs on the weights, through Bayes' theorem \citep{bayes1958essay}.   In theory, this  can provide a principled approach to quantifying epistemic uncertainty, with important applications in safety-sensitive settings, \cite{kendall2017uncertainties,maddox2019simple, blundell2015weight}, although this is highly dependent on an appropriate choice of prior for the weights   \citep{fortuinbayesian,cinquin2021pathologies}. 

Exact Bayesian inference is intractable for modern neural networks which has led to a body of approximate methods, including variational inference \citep{kingma2013auto,blundell2015weight,kingma2015variational}, Laplace approximations \citep{mackay1992practical,yang2024bayesian, deng2022accelerated}, sampling-based approaches \citep{neal2012bayesian,chen2014stochastic}, dropout-based approximations \citep{gal2016dropout}, and architecture-specific Bayesian approximations \citep{harrison2024variational}. Recent surveys provide a broad overview of this area \citep{arbel2026primer, papamarkouposition}.

There is a growing body of literature on introducing such  approximations for training Bayesian neural networks including transformer models and LLMs, as a means of calibrating confidence of model outputs, \cite{tran2019bayesian, jiang2021can, xue2021bayesian, fan2020bayesian, zhang2021bayesian, daxberger2021laplace}.  In \cite{xiao2022uncertainty} the authors perform a systematic empirical evaluation of Bayesian and ensemble methods for uncertainty quantification for LLMs, see also \cite{mittal2025context}.  One recent direction relevant to our work is Bayesian modelling over low rank perturbations of a frozen base model, \cite{yang2024bayesian, dusenberry2020efficient, doan2025bayesian, onal2024gaussian} effectively generalising Bayesian last-layer parameterisations \cite{tran2019bayesian}.     

\paragraph{Variational inference with a R\'{e}nyi loss.}

There is a substantial literature on replacing the Kullback-Leibler objective in variational inference by Rényi or \(\alpha\)-divergence objectives.   This has been motivated by the specific behaviours of standard VI which can induce severely under- or over- inflated posterior approximations, depending on the placement of the variational measure within the KL.   In \cite{hernandez2016black} and \cite{li2016renyi}, the authors study the use of Rényi $\alpha$-divergences as an objective in variational inference, yielding the Variational Rényi (VR) bound.    They demonstrated that the non-linear structure of the Rényi divergence allows the approximate posterior to smoothly interpolate between mass-covering ($\alpha \to -\infty$) and zero-forcing or mode-seeking ($\alpha \to +\infty$) behaviours.   The resulting VR objective is approximated through Monte Carlo approximations and the reparametrisation trick, allowing for the calculation of gradients that are scaled by an importance weight. However, this approach relies on updating the parameters of a single, fixed parametric distribution (such as a Gaussian).

To scale $\alpha$-divergence variational inference to deep neural networks, \cite{li2017dropout} proposed using Monte Carlo Dropout as the approximate posterior, demonstrating that the network could achieve improved, mass-covering uncertainty estimates. However, representing the posterior via dropout forces all ensemble members to share a single underlying weight matrix. Because the stochastic samples are merely binary masks applied to a shared set of parameters, the ensemble is not an appropriate mechanism to promote specialisation.

In \cite{yang2020alpha}, the authors analysed the statistical guarantees of a similar $\alpha$-VB framework. In their formulation, the expected log-likelihood is evaluated linearly under the variational distribution, making their method mathematically equivalent to finding the closest approximation to a fractional or tempered posterior. They established that for $\alpha \in (0, 1]$, the $\alpha$-VB posterior concentrates around the true data-generating parameter at the minimax optimal rate, providing rigorous frequentist guarantees for point estimation in latent variable models.

Optimising the non-linear $\alpha$-Rényi objective presents significant practical challenges, primarily because the objective lacks Euclidean smoothness and can cause standard gradient descent to become highly unstable. To resolve this, \cite{guilmeau2025regularized} proposed mapping the optimisation problem into a non-Euclidean geometry,  by using a Bregman Proximal Gradient algorithm induced by the log-partition function of an exponential family.   The resulting scheme yields a stable, relaxed moment-matching scheme with strict convergence guarantees, at least for fixed exponential variational families.

\paragraph{Entropic risk minimisation.}

The $\alpha$-Renyi variational objective relates to the notion of entropic risk, which is widely used in finance 
 and statistics, \citep{follmer2011entropic,pichler2020entropy}.  For a random variable $Z$ it is defined via the cumulant generating function
$$
	R(t) = \frac{1}{t}\log \mathbb{E}[e^{tZ}],
$$
where $t\in \mathbb{R}$ is a temperature parameter.    For a fixed datapoint $(x,y)$, we identify the per-example loss likelihood $l_{\alpha}(Q; x,y)$ via entropic risk by choosing $Z = -\log p_{\theta}(y \mid x)$, where $\theta \sim Q$  and $t= -\alpha$.

The entropic risk's behaviour as a risk measure is largely determined by $t$.  When $t > 0$ ($\alpha < 0$), the exponential weights the worst outcomes, i.e. the highest losses, so that the objective tries to minimise the maximum loss,  thus preventing any single data point from suffering in a highly risk averse manner.   When $t < 0$ ($\alpha > 0$), the exponential heavily weights the best outcomes (lowest losses). The objective is satisfied if at least some outcomes are very good, ignoring the bad ones.  In the limit $t\rightarrow -\infty$, the objective is governed by the absolute minimum loss.  This ``risk-seeking'' behaviour is the property we exploit in our formulation, as it promotes the emergence of specialists.    

In the context of machine learning, entropic risk measures have been studied through Tilted Entropic Risk Minimisation (TERM) \citep{litilted}.   TERM applies tilting across the empirical data distribution to dynamically reweight samples, demonstrating that negative tilt parameters successfully suppress the gradients of noisy outliers.   Rather than tilting the empirical data distribution for a single model, our approach tilts the parameter posterior for an ensemble. Because our positive $\alpha$ corresponds mathematically to a negative tilt in the TERM framework, our interacting particle system naturally inherits these outlier-suppression properties, routing clean data to specialised particles while starving corrupted data of gradient influence.

\paragraph{Bayesian inference under model misspecification.}

Standard Bayesian updating can behave poorly under model misspecification. When the assumed model class does not contain the data-generating distribution, the posterior may concentrate around a pseudo-true parameter that is optimal for the wrong objective, and the resulting posterior predictive can be overconfident or even inconsistent \citep{grunwald2017inconsistency}. This has motivated generalised Bayesian approaches, in which the likelihood is replaced by a loss or tempered by a learning-rate parameter \citep{grunwald2007suboptimal,bissiri2016general,Knoblauch2022RoT}. Gibbs posteriors \citep{jiang2008gibbs,martin2022direct}, fractional or tempered posteriors \citep{bhattacharya2019bayesian,wenzel2020good}, and safe-Bayesian methods \citep{grunwald2012safe} all modify the strength or form of posterior updating to improve robustness under misspecification.

The $\alpha$-R\'enyi objective does not fall within the the generalised Bayesian framework, due to its nonlinearity in the posterior distribution $Q$.   Thus the posterior is no longer simply an exponential tilt of the prior by an additive empirical loss; it is a self-consistent predictive object.  In particular, both approaches seek to address the same problem within classical Bayesian inference, namely model misspecification.

\paragraph{Predictively oriented posteriors and model averaging.}

Our work is most closely related conceptually to predictively oriented posteriors (PrO) \citep{McLatchie2025PrO}. In that framework, the distribution \(Q\) over parameters is chosen according to the predictive quality of the induced mixture
\[
p_Q(y\mid x)
=
\int p_\theta(y\mid x)\,Q(d\theta),
\]
rather than according to the average fit of individual parameter values. This shift is especially important under misspecification, where no single parameter may adequately explain the data, but a mixture of complementary predictors may perform well. Related ideas appear in Bayesian model averaging, PAC-Bayesian analyses of majority votes, and ensemble-risk bounds \citep{lacasse2006pac,germain2015risk,wu2021chebyshev,masegosa2020learning}.

Recent work has also proposed using the discrepancy between classical Bayesian posteriors and predictively oriented posteriors as a diagnostic for model misspecification \citep{Liu2025Misspecification}. Our goal is complementary: rather than comparing the two endpoints, we introduce a continuous family of objectives between them and study how intermediate values of \(\alpha\) can induce posterior spread, specialisation, and uncertainty.

\paragraph{Deep ensembles and diversity-promoting training.}

Deep ensembles provide a practical and widely used alternative to Bayesian neural networks \citep{fort2019deep}. By training several models independently and averaging their predictions, deep ensembles often produce strong empirical uncertainty estimates and improved robustness. They are simple, scalable, and compatible with modern deep-learning pipelines. Calibration methods, including conformal and post-hoc approaches, can further improve the reliability of their predictive uncertainty \citep{angelopoulos2025learn,rivera2024conformal}.

However, independently trained ensembles do not, by themselves, specify what distribution over models they approximate. Their diversity is induced indirectly through initialisation, data order, optimiser noise, or architectural variation. Several works have therefore introduced explicit interaction, repulsion, or posterior-matching mechanisms to make ensembles more closely resemble Bayesian posterior samples or coordinated predictive distributions \citep{d2021repulsive, wild2023rigorous}. Our method belongs to this broad family of coordinated ensembles, but differs in its variational objective. Ensemble members interact through \(\alpha\)-dependent responsibilities, so that each example is softly routed toward the particles that explain it well. Diversity is therefore tied directly to predictive specialisation rather than being imposed only through external repulsion.

This also distinguishes our approach from standard mixture-of-experts methods. In mixture-of-experts models, specialisation is usually driven by a learned gating network that routes inputs to experts. In our setting, there is no separate gating model. The \emph{responsibilities} $w_i^{(\alpha)}$ arise from the variational objective itself
\[
w_i^{(\alpha)}(x,y)
=
\frac{p_{\theta_i}(y\mid x)^\alpha}
{\sum_{j=1}^M p_{\theta_j}(y\mid x)^\alpha}.
\]
The routing is thus induced implicitly from the  likelihood, rather than learned as an additional  component.

\paragraph{Parameter-efficient fine-tuning and LoRA ensembles.}

Large language models are commonly adapted through parameter-efficient fine-tuning (PEFT), in which the base model is frozen and only a small number of additional or selected parameters are trained \citep{ding2022delta, han2024parameterefficient,xu2023parameter}. Methods include low-rank reparametrisations such as LoRA \citep{hu2022lora,yang2024bayesian,valipour2023dylora}, adapter modules \citep{houlsby2019parameter,pfeiffer2021adapterfusion}, prompt- and prefix-tuning \citep{li2023prefix, li2021prefix}, and selective fine-tuning methods \citep{lawton2023neural, guo2021parameter}.

PEFT is attractive for uncertainty-aware post-training because the adaptation space is much smaller than the full parameter space. A distribution over all transformer weights is usually impractical, but a distribution over LoRA adapters can be represented by a modest ensemble of particles. Our framework takes this route: each particle is a LoRA adapter attached to a shared frozen base model. The base weights are common across particles, while the low-rank updates are trained jointly through the \(\alpha\)-Rényi objective. This gives a scalable approximation to a posterior-like distribution over adaptations rather than over the entire model.

Existing PEFT methods typically produce a single adapted model. Even when multiple adapters are trained, they are often combined heuristically or selected for different tasks. Our contribution is to provide a variational objective for training the adapter ensemble as a single interacting distribution, with \(\alpha\) controlling the balance between shared generalisation and specialisation.

\paragraph{Preference optimisation and uncertainty in alignment.}

Post-training for alignment often relies on supervised fine-tuning followed by preference-based methods such as RLHF, DPO, or related objectives, \cite{christiano2017deep, ouyang2022training}. These methods usually optimise a single policy against preference data, either through an explicit reward model or through an implicit preference likelihood, \cite{bradley1952rank, rafailov2023direct, azar2024general} . In the presence of inconsistent preferences , ambiguous prompts, adversarial examples, or underspecified safety constraints, a single adapted model can collapse multiple plausible behaviours into one compromise \cite{bakker2022fine, casper2023open}. This can contribute to over-refusal, over-permissiveness, or brittle behaviour under distribution shift.

Our preference-learning variant extends the \(\alpha\)-Rényi objective to pairwise preference likelihoods. Each particle induces its own DPO-style preference probability, and the ensemble is trained through an \(\alpha\)-aggregated preference likelihood. As in the supervised case, the resulting responsibilities route preference examples toward the particles that currently explain them well. This gives a distributional generalisation of DPO: conflicting alignment pressures can be represented through posterior diversity rather than forced into a single adapter.

This distributional view is also useful for safety evaluation. Instead of evaluating only a single post-trained model, one can estimate disagreement across the learned adapter distribution and use this as an epistemic signal. High posterior disagreement may indicate prompts for which the alignment data do not determine a unique safe behaviour, suggesting a role for abstention, clarification, fallback policies, or targeted red-teaming \cite{houlsby2011bayesian, geifman2017selective, perez2022red}.

\end{document}